\documentclass{article}

\PassOptionsToPackage{numbers, compress}{natbib}

\usepackage[main, final]{neurips_2026}

\usepackage[utf8]{inputenc} 
\usepackage[T1]{fontenc}    
\usepackage{hyperref}       
\usepackage{url}            
\usepackage{booktabs}       
\usepackage{amsfonts}       
\usepackage{nicefrac}       
\usepackage{microtype}      
\usepackage{xcolor}         
\usepackage{array}
\usepackage{booktabs}
\usepackage{makecell}
\usepackage{array}
\usepackage[table]{xcolor}
\usepackage{subcaption}
\usepackage{float}

\usepackage{algorithm}
\usepackage{algorithmic}
\usepackage{wrapfig}
\usepackage{siunitx}

\usepackage{amsmath}
\usepackage{amssymb}
\usepackage{mathtools}
\usepackage{amsthm}
\usepackage{enumitem}
\usepackage{bm}
\usepackage{bbm}
\usepackage[toc,page,header]{appendix}
\usepackage{minitoc}
\newtheorem{proposition}{Proposition}

\newcommand{\best}[1]{%
    \begingroup
    \setlength{\fboxsep}{1.5pt}      
    \setlength{\fboxrule}{0.1pt}    
    \fcolorbox{black}{gray!10}{\textbf{#1}}%
    \endgroup
}

\newcommand{\lightmidrule}{%
  
    \arrayrulecolor{gray!60}%
    \specialrule{0.3pt}{1pt}{1pt}%
    \arrayrulecolor{black}%
  
}

\usepackage{multirow}
\newcommand{\pms}[1]{\scalebox{0.75}{$\pm$ #1}}
\usepackage[capitalize,noabbrev]{cleveref}

\title{CVTA: Cross-Variable Temporal Attention for Multivariate Irregular Time Series Prediction}

\author{%
  Ankitkumar Joshi\\
  Department of Computer Science\\
  University of Pittsburgh\\
  Pittsburgh, PA 15260 \\
  \texttt{ahjoshi@pitt.edu} \\
  \And
  Milos Hauskrecht\\
  Department of Computer Science\\
  University of Pittsburgh\\
  Pittsburgh, PA 15260 \\
  \texttt{milos@pitt.edu} 
}

\doparttoc 
\faketableofcontents 

\begin{document}

\maketitle

\begin{abstract}
Irregularly sampled multivariate time series arise in many real-world domains such as healthcare, climate science, and large-scale monitoring systems. Characterized by asynchronous observations and heterogeneity across variables, such time series pose significant challenges for forecasting. In this work, we introduce Cross-Variable Temporal Attention (CVTA), a short-term, continuous-time forecasting framework that learns prediction functions from its compact, interpretable state summarizing recent temporal dynamics across variables. CVTA uses Variable-Conditioned Embeddings to project heterogeneous variables into variable-specific latent spaces, and applies temporal attention to capture cross-variable dependencies from this state. Furthermore, we demonstrate that CVTA can be integrated as an auxiliary module into existing state-of-the-art prediction models through a dynamic meta-decision model. Extensive experiments on multiple real-world benchmarks show that CVTA achieves strong short-term predictive accuracy, supports efficient online inference, improves existing prediction models when used as an auxiliary module, and learns attention patterns that align well with domain knowledge.
\end{abstract}

\section{Introduction}
Time series forecasting plays a critical role in a wide range of scientific and real-world applications, including healthcare, finance, and large-scale monitoring systems, where accurate predictions enable early event detection and informed decision-making under uncertainty. To better contextualize our problem setting, we classify time series along the two dimensions: marking and sampling. 
\begin{wraptable}{r}{0.48\linewidth}
\vspace{-1em}
\scriptsize
\centering
\caption{Characterization of time series types by marking and sampling.}
\label{tab:ts-types}
\begin{tabular}{l c >{\columncolor{gray!15}}c c}
\toprule
& \makecell{\textbf{Regular} \\ \textbf{Time Series}}
& \makecell{\textbf{Irregular} \\ \textbf{Time Series}}
& \makecell{\textbf{Event} \\ \textbf{Time Series}} \\
\midrule
\textbf{Marking}
& Continuous
& Continuous
& Discrete \\
\textbf{Sampling}
& Uniform
& Non-Uniform
& Non-Uniform \\
\bottomrule
\end{tabular}
\vspace{-1em}
\end{wraptable}
\textit{Marking} refers to the type of variable recorded over time; \textit{Sampling} describes how the observations are collected in time. \Cref{tab:ts-types} summarizes common time series types under this framework. For example, event time series consist of irregularly sampled discrete events, whereas irregular time series consist of asynchronously observed real-valued measurements (e.g., blood pressure, heart rate). In this work, we focus on developing models for irregular multivariate time series prediction.

Real-world time series are often multivariate and irregularly sampled, with observations arriving asynchronously and variables governed by heterogeneous underlying processes.
To address irregular sampling, a variety of modeling approaches have been proposed, which can be broadly classified according to how they represent the underlying continuous-time dynamics. Differential equation–based approaches \cite{LatentODEPaper,GRUODEBayes,NeuralODE,CRUPaper,NeuralCDEPaper} model latent dynamics in continuous time using ordinary or stochastic differential equations. Autoregressive models for irregular time series \cite{GRUDPaper,NeuralHawkes,RMTPP,GRUwE} maintain a latent state whose evolution is governed by a fixed set of parameterized update equations. Attention-based models \cite{mTANDPaper,Raindrop,grafitiPaper,HyperIMTS} represent temporal dependencies using attention mechanisms in a learned embedding space.

\begin{wrapfigure}[28]{r}{0.36\linewidth}
    \vspace{-1em}
    \centering
    \includegraphics[width=\linewidth]{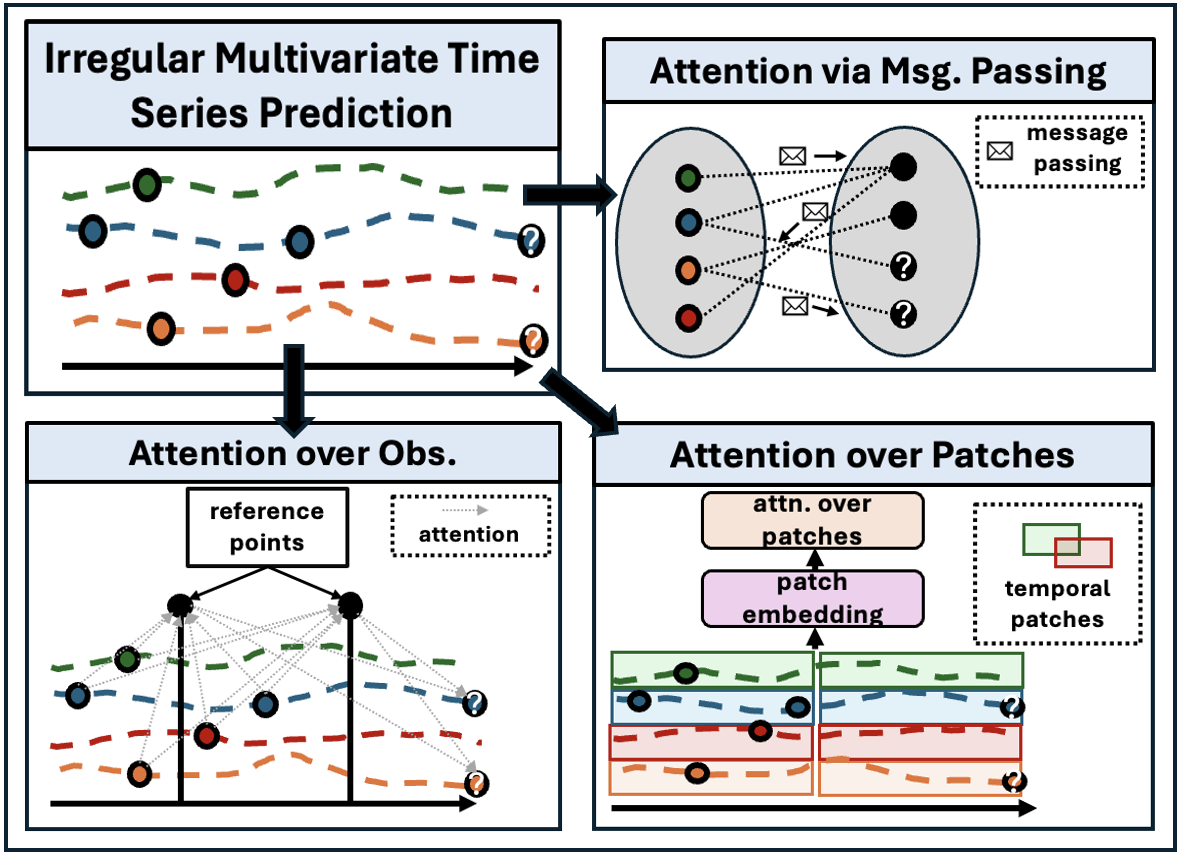}
    \caption{Overview of attention-based architectures for irregular multivariate time series prediction. Irregularly sampled multivariate observations (\textbf{top left}) can be modeled using different attention mechanisms: point-wise attention (\textbf{bottom left}) applies attention directly to individual observations; patch-based attention (\textbf{bottom right}) aggregates observations into temporal patches before applying attention; and structured message passing (\textbf{top right}) represents time points and variables as nodes in graphs or hypergraphs and learns attention via message passing.}
    \label{fig:attention_mechanisms}
\end{wrapfigure}

As shown in \cref{fig:attention_mechanisms}, existing attention-based forecasting methods model irregular time series by applying attention either at the level of individual observations \cite{mTANDPaper,TransformerHawkes,SAHP}, over temporal patches \cite{TPGNN,hiPatch}, or through structured message passing mechanisms \cite{Raindrop,grafitiPaper,HyperIMTS}. While expressive, most existing attention-based models face two challenges. First, models rely on embeddings that are shared across all observed variable types, implicitly enforcing a common notion of temporal relevance across heterogeneous variables. However, in many real-world multivariate time series, variables are governed by distinct sampling processes and evolve at different time scales. For example, in clinical monitoring, elapsed time carries very different predictive meaning for frequently observed physiological signals (e.g., heart rate sampled every few minutes) than for infrequently measured laboratory tests (e.g., lactate levels sampled every 6-12 hours). A one-hour gap since the last heart rate measurement may indicate sensor failure or clinical deterioration, whereas the same gap for lactate is commonly observed. 
Second, many existing models construct prediction functions that attend to full observation history. While this is important in capturing long-range dependencies, it may also increase inference cost in online deployment settings \cite{GRUwE} and make predictions harder to interpret.

These observations motivate a model that explicitly accounts for variable-level heterogeneity while retaining the efficiency and interpretability benefits of a compact and interpretable state representation. In this work, we propose \underline{C}ross-\underline{V}ariable \underline{T}emporal \underline{A}ttention (CVTA), 
a short-term continuous-time prediction model that learns from recent observations and their elapsed times. The key insight behind CVTA is that, for short-term prediction, recent observations and their elapsed times often provide enough predictive signal to enable accurate, efficient, and interpretable forecasting.

The main contributions of our work are as follows:
\begin{itemize}[left=0pt, itemsep=2pt, parsep=0pt, topsep=0pt]
    \item We propose a novel Cross-Variable Temporal Attention (CVTA), a short-term multivariate temporal attention mechanism for irregular time series prediction. In the next observation prediction evaluations, we find that CVTA exhibits strong performance against state-of-the-art (SOTA) prediction models, while offering efficient run time in both batched training and online inference modes.
    \item To the best of our knowledge, CVTA is among the first neural models for irregular multivariate time series to combine an interpretable state with cross-variable temporal attention, jointly delivering the accuracy, efficiency, and interpretability required for real-world deployment in settings such as ICU monitoring. We further verify the faithfulness of CVTA's interpretability through attention-erasure tests.
    \item We introduce a dynamic meta-decision framework that integrates CVTA as an auxiliary short-term prediction module on top of existing irregular time series models, trained jointly in an end-to-end manner. This combination consistently improves base model performance across five benchmarks and four prediction horizons.
\end{itemize}

\section{Related Work}
This section reviews architectures for irregular multivariate time series prediction and contrasts them with CVTA.

\textbf{Attention-based approaches.} Attention mechanisms have become widely used for irregular time series because they can flexibly model long-range dependencies and cross-variable interactions. Existing methods apply attention at different granularities: point-wise models such as mTAND \cite{mTANDPaper}, THP \cite{TransformerHawkes}, and SAHP \cite{SAHP} attend over individual observations or event arrivals; patch-based models such as T-PatchGNN \cite{TPGNN} and Hi-Patch \cite{hiPatch} aggregate observations into temporal segments before applying attention; and graph-based models such as Raindrop \cite{Raindrop}, GraFITi \cite{grafitiPaper}, and HyperIMTS \cite{HyperIMTS} use message passing over time-variable graph or hypergraph structures. 
As summarized in \Cref{fig:attention_mechanisms}, these approaches effectively capture long-range dependencies but often rely on all past observations, which can increase online inference cost and reduce interpretability. In contrast, CVTA uses a compact state representation to support efficient and interpretable online predictions, making it well suited for real-world streaming applications.

\textbf{Alternative approaches.}
Differential equation--based models, including Latent ODE, ODE-RNN \cite{LatentODEPaper}, ContiFormer \cite{ContiFormer}, and Neural CDE \cite{NeuralCDEPaper}, model latent dynamics in continuous time, but often require costly numerical solvers. CRU and f-CRU \cite{CRUPaper} improve efficiency by using closed-form linear stochastic dynamics, while recurrent models with learned decay mechanisms \cite{NeuralHawkes,GRUDPaper,GRUwE} maintain compact latent states without needing to invoke external solvers. 
These models are often efficient for online inference, but their latent states and prediction functions are less directly interpretable, and their sequential state updates are harder to parallelize during batched training. In contrast, CVTA combines an interpretable state with cross-variable temporal attention to support online efficiency, batched parallelism, and interpretable prediction.

\section{Cross-Variable Temporal Attention}
In this work, we develop CVTA for short-term prediction in $F$-variate irregular time series. 
At time $t$, CVTA summarizes the observed history $\mathcal{H}_t$ with a compact, interpretable state, which we refer to as CVTA state, consisting of recent values, elapsed times, and local slopes. Unlike existing attention models \cite{HyperIMTS,grafitiPaper,mTANDPaper} that rely on attention over the entire observed sequence, CVTA performs prediction by attending only over this compact state. We first define this state, then introduce the cross-variable temporal attention mechanism, and finally describe how CVTA can be used as an auxiliary module through a dynamic meta-decision framework.
\subsection{CVTA State} For each variable $f \in \{1,\dots,F\}$ at time $t \in \mathbb{R}$, CVTA maintains the following state:
\begin{equation}
\mathcal{S}_{t,f} =
\Big(
x_{t,f},\;
m_{t,f},\;
\delta_{t,f},\;
\dot{x}_{t,f},\;
g_{t,f}
\Big),
\end{equation}

where $x_{t,f} \in \mathbb{R}$ is the most recently observed value of variable $f$ up to time $t$;
the observation mask $m_{t,f} \in \{0,1\}$ indicates whether variable $f$ has been observed before $t$; $\delta_{t,f} \in \mathbb{R}_{\geq 0}$ is the time elapsed since the last observation of $f$; $\dot{x}_{t,f} \in \mathbb{R}$ is the local slope (rate of change) between the two most recent observations, computed as: 
\begin{equation}
    \dot{x}_{t,f} = \frac{o_{b, f} - o_{a,f}}{b - a}, \mathrm{where} \; a < b \leq t,
\end{equation}
where $o_{a,f}, o_{b,f}$ are observations of $f$ at times $a$ and $b$; $g_{t,f} \in \{0,1\}$ is a slope mask indicating whether $\dot{x}_{t,f}$ is defined (i.e., at least two prior observations exist). We refer to $\mathcal{S}_t=\{\mathcal{S}_{t,f}\}_{f=1}^{F}$ as the \emph{CVTA state}. It is maintained recursively (\crefrange{eq:update_x}{eq:slope_mask_update}) and can be viewed as an approximation of the Markov state of the process generating the observations.

\textbf{State Transition Model.} As shown in \cref{fig:markov_state}, the proposed state consists of three real vectors: $\mathbf{x}_t,\bm{\delta}_t,\mathbf{\dot{x}}_t \in \mathbb{R}^F$ and two binary vectors: $\mathbf{m}_t, \mathbf{g}_t \in \{0,1\}^F$. We now describe how the CVTA state components are initialized and updated sequentially upon arrival of a new observation. We initialize the state components as:
\[
(\mathbf{x}_0,\;\mathbf{m}_0,\;\bm{\delta}_0,\;\mathbf{ \dot{x}}_0,\; \mathbf{g}_0)=\mathbf{0}.
\]
Given the previous state $\mathcal{S}_{t-1}$, consider a new observation $\textbf{o}_t\in\mathbb{R}^F$, and binary observation indicator vector $\textbf{i}_t \in \{0,1\}^F$arriving after time interval $\Delta \tau \in \mathbb{R}_{>0}$ \footnote{In equations \eqref{eq:update_delta} and \eqref{eq:update_slope}, the scalar time increment \(\Delta \tau\) is broadcast to match the dimension of the vector \(\bm{\delta}_{t-1}\). This is equivalent to \(\bm{\Delta\tau} = \Delta \tau \cdot \mathbbm{1}\), representing a uniform time span applied across all variables.}. Previous state $S_{t-1}$ is updated using \crefrange{eq:update_x}{eq:slope_mask_update} where, $\mathrm{max}$ and $\odot$ denote element-wise maximum and multiplication operations respectively. The update in \cref{eq:update_x} stores the most recent observations by replacing only those entries corresponding to newly observed variables. The mask update in \cref{eq:update_mask} records whether each variable has ever been observed, which is necessary to distinguish true zero values from missing data. As shown in \cref{eq:update_delta}, the elapsed time is reset for newly observed variables and incremented by $\bm{\Delta \tau}$ otherwise. Since the slope calculation requires at least two observations, the intermediate gate $\mathbf{z}_t$ in \cref{eq:slope_mask} is computed which activates only when both current and prior observations are available. Under this gating, \cref{eq:update_slope} updates the slope where possible and carries forward the previous estimate otherwise. Finally, similar to $\mathbf{m}_t$, the slope mask $\mathbf{g}_t$ in \cref{eq:slope_mask_update} tracks if a valid slope has been computed for each variable.

\begin{figure*}[t]
    \centering
    \begin{minipage}[c]{0.44\textwidth}
        \centering
        \includegraphics[width=\linewidth]{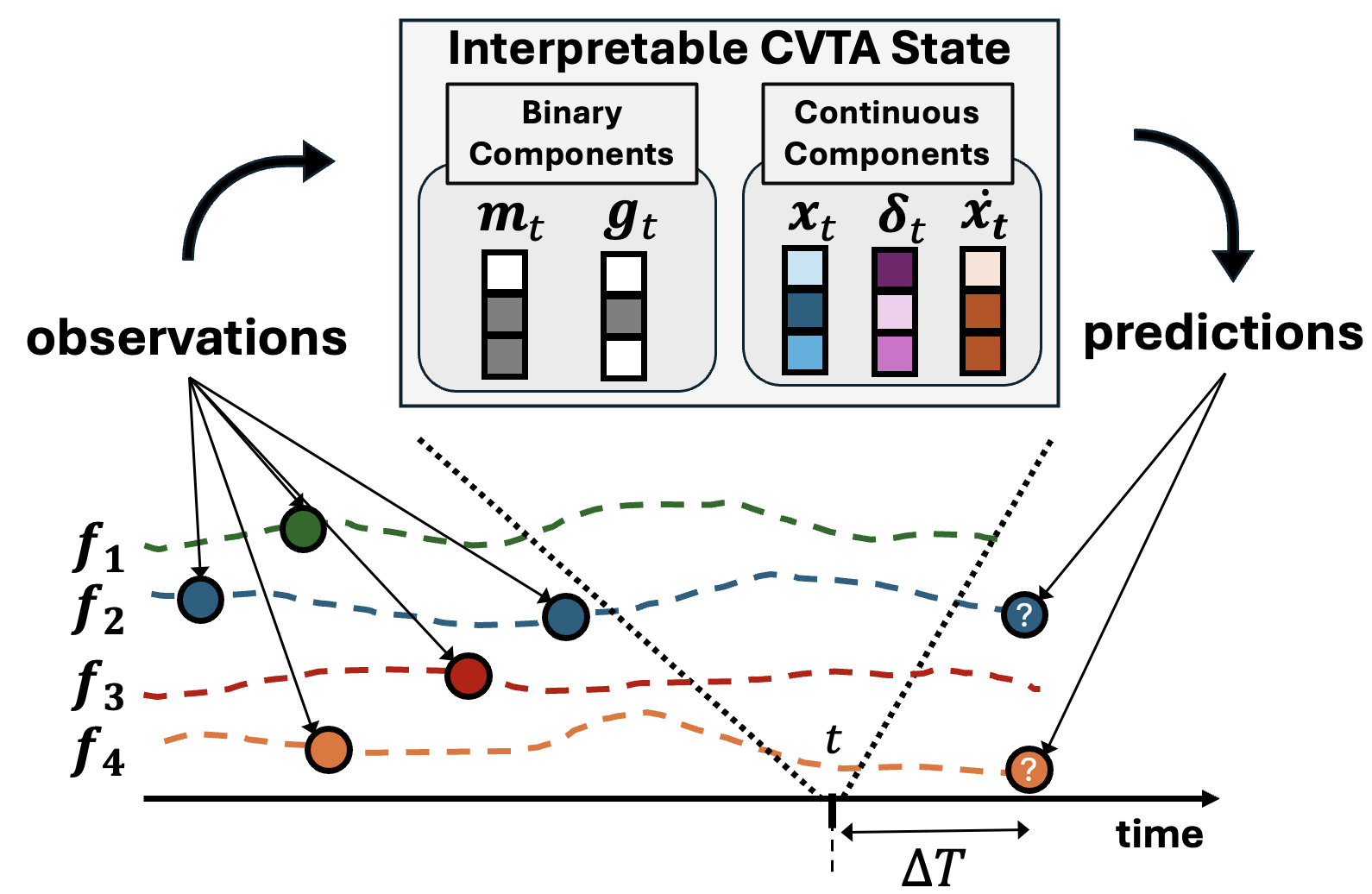}
        \caption{
CVTA state representation.
        }
        \label{fig:markov_state}
    \end{minipage}
    \hfill
    \begin{minipage}[c]{0.55\textwidth}
        \small
        \begin{align}
        \mathbf{x}_{t} &= \mathbf{i}_{t} \odot \mathbf{o}_{t} + (\mathbbm{1} - \mathbf{i}_{t}) \odot \mathbf{x}_{t-1}, \label{eq:update_x} \\
        \mathbf{m}_{t} &= \max(\mathbf{i}_t, \, \mathbf{m}_{t-1}), \label{eq:update_mask} \\
        \bm{\delta}_{t} &= \mathbf{i}_{t} \odot \mathbf{0} + (\mathbbm{1} - \mathbf{i}_{t}) \odot (\bm{\delta}_{t-1} + \bm{\Delta \tau}), \label{eq:update_delta} \\
        \mathbf{z}_t &= \mathbf{i}_{t} \odot \mathbf{m}_{t-1}, \label{eq:slope_mask}\\
        \mathbf{\dot{x}}_{t} &= \mathbf{z}_t \odot \frac{\mathbf{o}_{t} - \mathbf{x}_{t-1}}{\bm{\delta}_{t-1} + \bm{\Delta\tau}} + (\mathbbm{1} - \mathbf{z}_t) \odot \mathbf{\dot{x}}_{t-1}, \label{eq:update_slope}\\
        \mathbf{g}_{t} &= \max(\mathbf{z}_t, \, \mathbf{g}_{t-1}).
        \label{eq:slope_mask_update}
        \end{align}
        \captionof*{figure}{Eqs.~\eqref{eq:update_x}--\eqref{eq:slope_mask_update}: CVTA State Update Equations.}
    \end{minipage}
    \vspace{-1em}
\end{figure*}

\subsection{CVTA: Cross-Variable Temporal Attention}
The CVTA state defines a fixed-dimensional, interpretable representation of the multivariate process, invariant to sequence length. Building on this state, we learn a continuous-time prediction function through a Cross-Variable Temporal Attention (CVTA) mechanism. This proceeds in two stages: (i) projecting the state components into a learned embedding space, and (ii) modeling dependencies across variables with the help of attention over these embeddings.

\textbf{Variable-Conditioned Embeddings.}
Real-world multivariate time series exhibit substantial variable-level heterogeneity, including differences in statistical properties, semantics, noise characteristics, and temporal dynamics \footnote{We provide a detailed motivation for modeling variable-level heterogeneity in multivariate time series in the Appendix~\ref{appendix:motivation_variable_cond_emb}.}. A shared embedding across variables can therefore be limiting: when heterogeneous variables share the same scalar state values, shared embeddings may collapse their variable identity (see \cref{prop:shared_emb_collapse}). To address this, we introduce Variable-Conditioned Embeddings (VCEs), which parameterize the representation of each variable separately rather than enforcing a shared embedding space. More concretely, for each variable \(f\), we define a set of embedding operators  
\[
\phi^{(f)}_{\bullet} : \mathbb{R} \to \mathbb{R}^D, \qquad \bullet \in \{x, \delta, \dot{x}, m, g\},
\]  
each implemented as an affine mapping  
\[
\phi^{(f)}_{\bullet}(z) = \mathbf{w}^{(f)}_{\bullet} z + \mathbf{b}^{(f)}_{\bullet},
\] 
where \(\mathbf{w}^{(f)}_{\bullet}, \mathbf{b}^{(f)}_{\bullet} \in \mathbb{R}^{D}\) are learned parameters specific to both the variable \(f\) and the CVTA state component \(\bullet\). Given the CVTA state \(\mathcal{S}_{t,f}\) at time \(t\) and variable \(f\), and a prediction horizon \(\Delta T > 0\), we first adjust the elapsed time to \(\delta'_{t,f} = \delta_{t,f} + \Delta T\), aligning the state with the target prediction time. To preserve potential periodicity in temporal dynamics, we embed \(\delta'_{t,f}\) using a variable-conditioned affine transformation followed by a sinusoidal nonlinearity. We learn two separate temporal embeddings (one for each attention branch introduced later) along with embeddings of the remaining state components:
\[
\begin{aligned}
\mathbf{e}^{(f)}_{\delta_r} &= \sin\!\left( \phi^{(f)}_{\delta_r}(\delta'_{t,f}) \right), &
\mathbf{e}^{(f)}_{\delta_s} &= \sin\!\left( \phi^{(f)}_{\delta_s}(\delta'_{t,f}) \right), &
\mathbf{e}^{(f)}_{m}        &= \phi^{(f)}_{m}(m_{t,f}), \\
\mathbf{e}^{(f)}_{g}        &= \phi^{(f)}_{g}(g_{t,f}), &
\mathbf{e}^{(f)}_{x}        &= \phi^{(f)}_{x}(x_{t,f}), &
\mathbf{e}^{(f)}_{\dot{x}} &= \phi^{(f)}_{\dot{x}}(\dot{x}_{t,f}).
\end{aligned}
\]

\begin{figure*}[t]
    \centering
    \begin{subfigure}{0.68\linewidth}
        \centering
        \includegraphics[width=\linewidth]{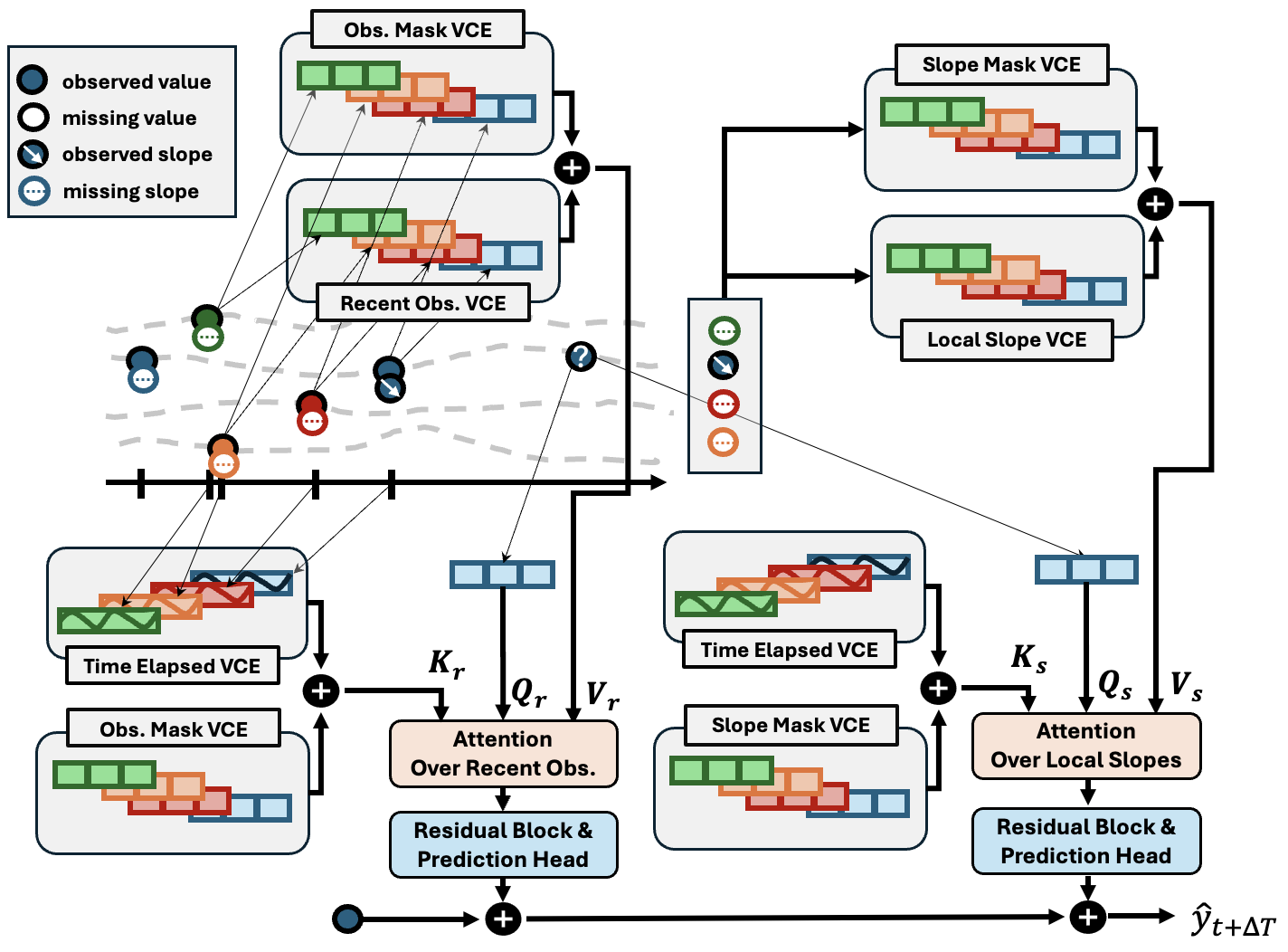}
        \caption{CVTA model architecture.}
        \label{fig:cvta}
    \end{subfigure}
    \hfill
    \begin{subfigure}{0.31\linewidth}
        \centering
        \includegraphics[width=\linewidth]{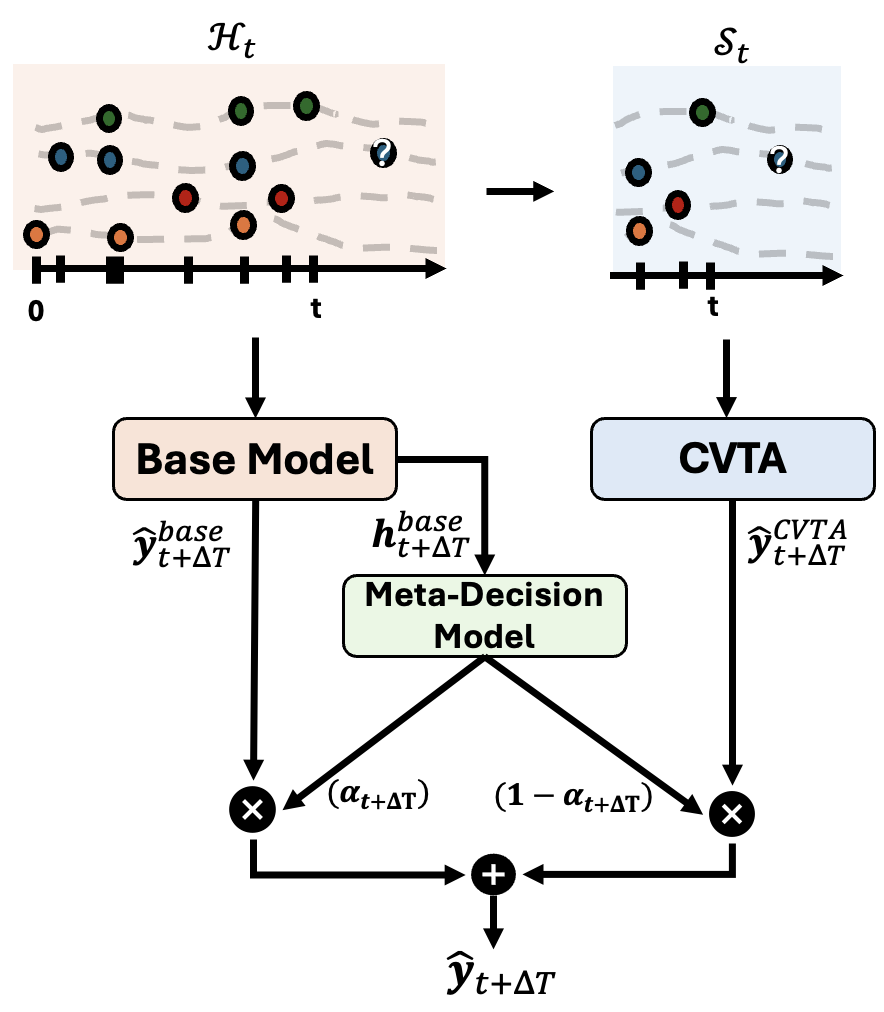}
        \caption{CVTA as an auxiliary prediction module.}
        \label{fig:cvta_as_aux_module}
    \end{subfigure}
    \caption{
    Cross-Variable Temporal Attention (CVTA) architectures.
    (\textbf{Left}) CVTA model performs attention over recent observations and local dynamics using Variable-Conditioned Embeddings (VCEs), enabling robust forecasting under irregular sampling and missing data. (\textbf{Right}) CVTA integrated as an auxiliary module with a base model and dynamic meta-decision.
    }
    \label{fig:cvta_combined}
    \vspace{-1em}
\end{figure*}

\textbf{Cross-Variable Temporal Attention.}
We model cross-variable dependencies with the help of the attention \cite{AttentionPaper} operation over the variable-conditioned embeddings. To preserve interpretability in continuous-time predictions, we decompose the attention into two additive branches: one focused on recent observed values, and another on local slope dynamics.

\textbf{Attention over Recent Observations.}
This attention branch captures the dependencies between variables based on their most recent observations and their time elapsed. For each variable~\(f\), queries are derived from a learned variable embedding, keys from horizon-adjusted elapsed time and mask embeddings, and values from the recent observation and mask embeddings:
\begin{equation}
\mathbf{q}_r^{(f)} = \mathbf{E}_r^\top \mathbf{c}^{(f)}, \quad
\mathbf{k}_r^{(f)} = \mathbf{e}^{(f)}_{\delta_r} + \mathbf{e}^{(f)}_{m}, \quad
\mathbf{v}_r^{(f)} = \mathbf{e}^{(f)}_{x} + \mathbf{e}^{(f)}_{m}.
\label{eq:attn_over_rec_obs}
\end{equation}
where, \(\mathbf{c}^{(f)} \in \{0,1\}^{F}\) is a one-hot variable indicator and \(\mathbf{E}_r \in \mathbb{R}^{F \times D}\) is a learnable embedding matrix. Incorporating the mask embedding \(\mathbf{e}^{(f)}_{m}\) into both the key and value allows the model to distinguish between missing observations and observed zero values. Stacking the per‑variable vectors into matrices \(\mathbf{Q}_r, \mathbf{K}_r, \mathbf{V}_r\), we compute the latent representation at query points using scaled dot‑product attention:
\[
\mathbf{H}_r = \mathrm{Attention}(\mathbf{Q}_r, \mathbf{K}_r, \mathbf{V}_r).
\]

\textbf{Attention over Local Slope.} 
Beyond recent values, local temporal trends provide critical signals for short-term future predictions. To capture cross-variable dependencies in these dynamics, we introduce a second attention branch over slope embeddings. Similar to the recent-observation branch, for each variable \(f\) we define:
\begin{equation}
\mathbf{q}_s^{(f)} = \mathbf{E}_s^\top \mathbf{c}^{(f)},\quad
\mathbf{k}_s^{(f)} = \mathbf{e}^{(f)}_{\delta_s} + \mathbf{e}^{(f)}_{g},\quad 
\mathbf{v}_s^{(f)} = \mathbf{e}^{(f)}_{\dot{x}} + \mathbf{e}^{(f)}_{g},
\end{equation}
where \(\mathbf{E}_s \in \mathbb{R}^{F \times D}\) is a learnable query embedding for slopes, \(\mathbf{e}^{(f)}_{\delta_s}\) is a separate elapsed-time embedding for slopes, \(\mathbf{e}^{(f)}_{\dot{x}}\) is the embedded local slope, and \(\mathbf{e}^{(f)}_{g}\) is the slope mask embedding. Similar to \cref{eq:attn_over_rec_obs}, the key and the value explicitly combine their embeddings with the slope mask embedding, allowing the model to distinguish between defined and undefined trends (e.g., when fewer than two observations are made). We obtain the slope-aware latent representation:
\begin{equation}
    \mathbf{H}_s = \mathrm{Attention}(\mathbf{Q}_s, \mathbf{K}_s, \mathbf{V}_s).
\end{equation}
\textbf{Residual Blocks.} The output of the two attention branches $\textbf{H}_{\bullet}, \bullet \in \{r, s\}$, is further refined using a position-wise feed-forward network (FFN) with a residual connection and layer normalization:
\begin{equation}
\mathbf{\tilde{H}}_{\bullet} = \mathrm{LayerNorm}\!\left( \mathbf{H}_{\bullet} + \mathrm{FFN}(\mathbf{H}_{\bullet}) \right).
\end{equation}
Here, $\mathrm{FFN}(\cdot)$ operation consists of a two-layer feed-forward network with $\mathrm{ReLU}$ intermediate activation. Both $\mathrm{FFN}(\cdot)$ and $\mathrm{LayerNorm}(\cdot)$ are learned independently for the two attention branches.

\textbf{Prediction Heads.} For each attention branch, we map the residual block output to a prediction through a linear projection:
\begin{equation}
\mathbf{\hat{y}}_{\bullet} = \mathbf{W}_{\bullet} \mathbf{\tilde{H}}_{\bullet} + \mathbf{b}_{\bullet}.
\end{equation}
where each attention branch uses its own set of projection parameters \(\{\mathbf{W}_{\bullet}, \mathbf{b}_{\bullet}\}\).

\textbf{Final Prediction.} The final prediction at time $\Delta T \in \mathbb{R}_{>0}$ in the future having seen observations until time $t$, is obtained by adding the estimates derived from recent observation ($\mathbf{\hat{y}_r}$) and local slope prediction heads ($\mathbf{\hat{y}_s}$):
\begin{equation}
\label{eq:final_prediction}
\mathbf{\hat{y}}_{t+\Delta T} = \mathbf{x}_t + \mathbf{\hat{y}}_{r} + \mathbf{\hat{y}}_{s}.
\end{equation}
\textbf{Summary of CVTA.} CVTA is a short-term continuous-time prediction model for irregular multivariate time series. It approximates local temporal dynamics with a compact CVTA state and as illustrated in \cref{fig:cvta}, it uses a dual-branch attention mechanism to enable interpretable predictions.

\subsection{CVTA as Auxiliary Module} 
\label{section:cvta_as_aux_pred_head} 
CVTA relies on recent temporal dynamics to formulate short-term predictions. For tasks requiring longer-term context, CVTA can be integrated as an auxiliary module into existing architectures with the goal of improving prediction by better utilizing recent observations. In this setup, the resulting combined model consists of two complementary prediction heads, along with a dynamic meta-decision model that adaptively merges their predictions based on the latent representation extracted from the base model.

\textbf{Meta-Decision Model.} The meta-decision weight $\alpha_{t+\Delta T}$ is computed using a feed‑forward network (FFN) applied to the base model’s latent representation, followed by a sigmoid activation $\sigma$ to constrain the weight to be in the interval $[0,1]$:
\begin{equation}
    \alpha_{t+\Delta T} = \displaystyle \sigma\Bigl(\,\mathrm{FFN}\bigl(\mathbf{h}_{t+\Delta T}^{{\scriptscriptstyle \mathrm{base    }}}\bigr)\,\Bigr).
\end{equation}
As shown in \cref{fig:cvta_as_aux_module}, the final prediction is a convex combination of the base model and CVTA predictions, weighted by the learned meta-decision output:
\begin{equation}
    \mathbf{\hat{y}}_{t+\Delta T} = \alpha_{t+\Delta T} \cdot \mathbf{\hat{y}}_{t+\Delta T}^{{\scriptscriptstyle \mathrm{base}}} + (1 - \alpha_{t+\Delta T}) \cdot \mathbf{\hat{y}}_{t+\Delta T}^{{\scriptscriptstyle \mathrm{CVTA}}}.
\end{equation}
To demonstrate the generality of CVTA as an auxiliary module, we integrate it with three top-performing benchmark models: GraFITi \cite{grafitiPaper}, GRUwE \cite{GRUwE}, and HyperIMTS \cite{HyperIMTS}. Additional implementation details are provided in Appendix~\ref{section:cvta_base_model_integration}.

\section{Empirical Evaluation}
\begin{wraptable}{r}{0.50\linewidth}
\centering
\label{tab:datasets}
\tiny
\vspace{-3em}
\caption{Dataset statistics and sampling characteristics.}
\setlength{\tabcolsep}{3pt}
\begin{tabular}{l@{\hspace{1pt}}|ccccc}
\toprule 
&
\textbf{\rotatebox{90}{USHCN}} &
\textbf{\rotatebox{90}{Physionet}} &
\textbf{\rotatebox{90}{MIMIC-III}} &
\textbf{\rotatebox{90}{MIMIC-IV}} &
\textbf{\rotatebox{90}{MIMIC-HiRes}} \\
\midrule
\textbf{Missingness type} & MCAR & NMAR & NMAR & NMAR & NMAR \\
\textbf{\# Vars.} & 5 & 37 & 96 & 100 & 393 \\
\textbf{Mean len} & 730.0  & 72.16 & 30.23 &  154.34 & 244.30 \\
\textbf{Max len} & 730 & 185 & 96 & 920 & 4299 \\
\textbf{Interarrival Variability} & 0.894 & 0.609  & 1.385 & 1.609 & 1.015 \\
\textbf{Missingness (\%)} & 22.36 & 83.85 & 95.59 & 97.83 & 98.74  \\
\textbf{Time Resolution} & 1d & 6m & 30m & 1m & 1s \\
\textbf{Context time span} & 365d & 24h &  36h & 36h & 48h \\
\textbf{Avg. Pred. horizon} & 2.85d & 1.19h & 6.11h & 4.33h  & 1.70m \\
\textbf{Std. Pred. horizon} & 0.35d & 2.10h & 5.73h & 6.91h & 1.58m\\
\bottomrule
\end{tabular}
\label{tab:dataset_descriptive_stats}
\vspace{-3em}
\end{wraptable}
Following \cite{GRUwE}, we evaluate CVTA on the next observation prediction task across multiple irregular time series benchmarks.

\textbf{Baselines.} We compare our proposed model CVTA with existing recurrent models for irregular time series including: CRU, f-CRU \cite{CRUPaper}, RKN-$\Delta_t$ \cite{RKNPaper}, GRU-$\Delta_t$ \cite{GRUPaper}, GRU-D \cite{GRUDPaper}, GRUwE \cite{GRUwE}; differential equation based approaches includes Latent ODE, ODE-RNN \cite{LatentODEPaper}, ContiFormer \cite{ContiFormer}; attention approaches including mTAND \cite{mTANDPaper}, T-PatchGNN \cite{TPGNN}, GraFITi \cite{grafitiPaper} and HyperIMTS \cite{HyperIMTS}.

\textbf{Datasets.}
We benchmark our method against state-of-the-art approaches using five irregular time series datasets: USHCN \cite{USHCNDataset}, Physionet \cite{PhysionetDataset}, MIMIC-III \cite{MIMICIIIDataset}, MIMIC-HiRes \cite{MIMICIIIDataset} and MIMIC-IV \cite{MIMIC_IV}. Dataset statistics are reported in \Cref{tab:dataset_descriptive_stats} and how they are extracted is described in Appendix~\ref{appendix:dataset_stats}.

\textbf{Model Training.} 
Following \cite{CRUPaper,GRUwE}, models are trained to reconstruct the full observation sequence from the historical observations using a masked MSE loss. We let $\mathbf{i}_t$ denote the observation indicator at time step $t$. For $N$ multivariate sequences, $T_n$ is the length of $n$-th sequence, the loss is given by:
\begin{equation}
\mathcal{L}
=
\frac{1}{
\sum_{n=1}^{N}
\sum_{t=1}^{T_n}
\mathbf{i}^{(n)}_t
}
\sum_{n=1}^{N}
\sum_{t=1}^{T_n}
\mathbf{i}^{(n)}_t
\odot
\left(
\hat{\mathbf{y}}^{(n)}_t - \mathbf{y}^{(n)}_t
\right)^2 .
\label{eq:masked_mse_loss}
\end{equation}

\begin{table*}[]
\tiny
\centering
\caption{Next observation prediction benchmark. We report the mean and standard deviation of the evaluation metrics, MSE and MAE (lower is better), over five independent random seeds. "OOM" refers to out-of-memory GPU error encountered during training. For better readability, we annotate the\best{best} and the \underline{second-best} models.}
\setlength{\tabcolsep}{2.5pt}
\begin{tabular}{l|cc|cc|cc|cc|cc}
\toprule
\multirow{2}{*}{\textbf{Model}}
& \multicolumn{2}{c}{\textbf{USHCN}}
& \multicolumn{2}{c}{\textbf{Physionet}}
& \multicolumn{2}{c}{\textbf{MIMIC-III}}
& \multicolumn{2}{c}{\textbf{MIMIC-IV}}
& \multicolumn{2}{c}{\textbf{MIMIC-HiRes}} \\
\cmidrule(lr){2-3} \cmidrule(lr){4-5} \cmidrule(lr){6-7} \cmidrule(lr){8-9} \cmidrule(lr){10-11}
&  MSE$\scalebox{0.75}{$\times 10^{-2}$}$ & MAE $\scalebox{0.75}{$\times 10^{-2}$}$
& MSE $\scalebox{0.75}{$\times 10^{-2}$}$ & MAE $\scalebox{0.75}{$\times 10^{-2}$}$
& MSE $\scalebox{0.75}{$\times 10^{-1}$}$ & MAE $\scalebox{0.75}{$\times 10^{-1}$}$
& MSE $\scalebox{0.75}{$\times 10^{-1}$}$ & MAE $\scalebox{0.75}{$\times 10^{-1}$}$
& MSE $\scalebox{0.75}{$\times 10^{-2}$}$ & MAE $\scalebox{0.75}{$\times 10^{-2}$}$ \\
\midrule
\textbf{CRU}
& $0.030\pms{0.019}$ & $0.519\pms{0.166}$
& $0.807\pms{0.035}$ & $5.233\pms{0.242}$
& $6.913\pms{\text{0.040}}$ & $5.797\pms{\text{0.103}}$
& $4.013\pms{\text{0.022}}$ & $4.128\pms{\text{0.141}}$
& $1.236\pms{0.035}$ & $6.735\pms{0.139}$ \\

\textbf{f-CRU}
& $0.020\pms{0.007}$ & $0.455\pms{0.077}$
& $1.095\pms{0.069}$ & $5.295\pms{0.132}$
& $7.021\pms{\text{0.030}}$ & $6.134\pms{\text{0.052}}$
& $4.142\pms{\text{0.021}}$ & $4.290\pms{\text{0.076}}$
& $1.191\pms{0.043}$ & $6.535\pms{0.214}$ \\

\textbf{RKN-$\Delta_t$}
& $0.015\pms{0.016}$ & $0.367\pms{0.178}$
& $0.680\pms{0.042}$ & $4.854\pms{0.197}$
& $7.784\pms{0.086}$ & $6.102\pms{0.038}$
& $5.707\pms{\text{0.031}}$ & $5.961\pms{\text{0.029}}$
& $1.292\pms{0.042}$ & $6.820\pms{0.081}$ \\

\textbf{GRU-$\Delta_t$}
& $0.035\pms{0.001}$ & $0.717\pms{0.018}$
& $0.449\pms{0.018}$ & $4.160\pms{0.154}$
& $5.731\pms{0.009}$ & $4.946\pms{0.021}$
& $4.359\pms{\text{0.005}}$ & $4.729\pms{\text{0.008}}$
& $1.414\pms{0.051}$ & $7.226\pms{0.086}$ \\

\textbf{GRU-D}
& $0.015\pms{0.009}$ & $0.386\pms{0.110}$
& $0.672\pms{0.026}$ & $5.459\pms{0.094}$
& $5.906\pms{\text{0.028}}$ & $5.362\pms{\text{0.046}}$
& $4.741\pms{\text{0.011}}$ & $4.867\pms{\text{0.038}}$
& $0.984\pms{0.028}$ & $5.853\pms{0.102}$ \\

\textbf{GRUwE}
& $0.015\pms{0.013}$ & $0.245\pms{0.187}$
& $0.233\pms{0.006}$ & $2.834\pms{0.035}$
& $4.016\pms{\text{0.006}}$ & $3.913\pms{\text{0.023}}$
& $2.113\pms{\text{0.009}}$ & $2.970\pms{\text{0.060}}$
& $\underline{0.816}\pms{0.026}$ & $\underline{5.278}\pms{0.036}$ \\

\lightmidrule

\textbf{Latent ODE}
& $0.007\pms{0.004}$ & $0.127\pms{0.031}$
& $0.676\pms{0.005}$ & $5.302\pms{0.001}$
& $6.598\pms{0.049}$ & $5.539\pms{0.049}$
& $4.981\pms{\text{0.010}}$ & $5.114\pms{\text{0.029}}$
& $1.209\pms{0.018}$ & $6.490\pms{0.049}$ \\

\textbf{ContiFormer}
& $\underline{0.005}\pms{0.002}$ & $\underline{0.125}\pms{0.016}$
& $0.479\pms{0.024}$ & $4.232\pms{0.001}$
& $7.140\pms{\text{0.029}}$ & $5.896\pms{\text{0.011}}$
& $5.612\pms{\text{0.018}}$ & $5.838\pms{\text{0.054}}$
& $1.348\pms{0.093}$ & $6.941\pms{0.399}$ \\

\textbf{ODE-RNN}
& $0.019\pms{0.017}$ & $0.220\pms{0.116}$
& $0.770\pms{0.042}$ & $5.519\pms{0.273}$
& $6.806\pms{\text{0.013}}$ & $5.754\pms{\text{0.118}}$
& $5.544\pms{\text{0.026}}$ & $5.725\pms{\text{0.072}}$
& $1.429\pms{0.046}$ & $7.251\pms{0.262}$ \\

\lightmidrule

\textbf{mTAND}
& $0.007\pms{0.004}$ & $0.194\pms{0.069}$
& $0.330\pms{0.015}$ & $3.411\pms{0.130}$
& $5.804\pms{\text{0.022}}$ & $5.054\pms{\text{0.031}}$
& $3.773\pms{\text{0.013}}$ & $4.296\pms{\text{0.126}}$
& $1.260\pms{0.018}$ & $6.885\pms{0.082}$ \\

\textbf{T-PatchGNN}
& $0.065\pms{0.031}$ & $0.909\pms{0.405}$
& $0.338\pms{0.032}$ & $3.259\pms{0.168}$
& $4.655\pms{\text{0.031}}$ & $4.219\pms{\text{0.164}}$
& $2.634\pms{\text{0.021}}$ & $3.476\pms{\text{0.036}}$
& $1.226\pms{0.010}$ & $6.689\pms{0.151}$ \\

\textbf{GraFITi}
& $0.074\pms{0.015}$ & $0.673\pms{0.042}$
& $\underline{0.232}\pms{\text{0.011}}$ & $\underline{2.684}\pms{\text{0.021}}$
& $4.062\pms{\text{0.012}}$ & $3.823\pms{\text{0.114}}$
& $2.144\pms{\text{0.014}}$ & $2.987\pms{\text{0.053}}$
& $1.419\pms{0.032}$ & $7.448\pms{0.121}$ \\

\textbf{HyperIMTS}
& $0.031\pms{0.013}$ & $0.322\pms{0.021}$
& $0.234\pms{0.003}$ & $2.719\pms{0.020}$
& $\best{3.820}\pms{\text{0.018}}$ & $\best{3.721}\pms{\text{0.137}}$
& $\underline{2.055}\pms{\text{0.015}}$ & $\best{2.739}\pms{\text{0.021}}$
& OOM & OOM \\

\midrule

\textbf{CVTA}
& $\best{0.003}\pms{0.002}$ & $\best{0.112}\pms{\text{0.033}}$
& $\best{0.226}\pms{\text{0.009}}$ & $\best{2.543}\pms{\text{0.024}}$
& $\underline{3.957}\pms{\text{0.009}}$ & $\underline{3.778}\pms{\text{0.016}}$
& $\best{2.026}\pms{\text{0.006}}$ & $\underline{2.811}\pms{\text{0.016}}$
& $\best{0.513}\pms{\text{0.018}}$ & $\best{3.941}\pms{\text{0.034}}$ 
\\
\bottomrule
\end{tabular}
\vspace{-1em}
\label{table:nextObsPrediction}
\end{table*}
\begin{table*}[t]
\centering
\tiny
\caption{CVTA architectural ablation study. We report MSE (mean $\pm$ std) and relative MSE performance drop in percentage ($\downarrow$\%), relative to CVTA's. Avg.\ $\downarrow$\% is the average MSE degradation across all the datasets.}
\label{tab:ablation_mse_drop_avg}
\setlength{\tabcolsep}{2pt} 
\begin{tabular}{l|cc|cc|cc|cc|cc|c}
\toprule
\textbf{Model}
& \textbf{USHCN} & \textbf{$\downarrow$\%}
& \textbf{Physionet}  & \textbf{$\downarrow$\%}
& \textbf{MIMIC-III } & \textbf{$\downarrow$\%}
& \textbf{MIMIC-IV}  & \textbf{$\downarrow$\%}
& \textbf{MIMIC-HiRes}  & \textbf{$\downarrow$\%}
& \textbf{Avg.\ $\downarrow$\%} \\
\midrule
\textbf{CVTA}
& $\best{0.003}\pms{0.002}$ & --
& $\best{0.226}\pms{0.009}$ & --
& $\best{3.957}\pms{0.009}$ & --
& $\best{2.026}\pms{0.006}$ & --
& $\best{0.513}\pms{0.018}$ & --
& -- \\
\lightmidrule
w/o VCE
& $0.013\pms{0.003}$ & 333.3
& $0.494\pms{0.008}$ & 118.6
& $5.236\pms{0.008}$ & 32.3
& $2.932\pms{0.006}$ & 44.8
& $1.016\pms{0.036}$ & 98.1
& \best{125.4} \\

w/o local slope
& $0.006\pms{0.004}$ & 100.0
& $0.241\pms{0.009}$ & 6.6
& $4.005\pms{0.009}$ & 1.2
& $2.129\pms{0.009}$ & 5.1
& $0.903\pms{0.018}$ & 76.0
& 37.8 \\

w/o key mask
& $0.004\pms{0.003}$ & 33.3
& $0.227\pms{0.010}$ & 0.4
& $3.960\pms{0.009}$ & 0.1
& $2.080\pms{0.006}$ & 2.7
& $0.809\pms{0.016}$ & 57.7
& 18.8 \\

w/o value mask
& $0.010\pms{0.001}$ & 233.3
& $0.237\pms{0.007}$ & 4.9
& $4.110\pms{0.007}$ & 3.9
& $2.117\pms{0.006}$ & 4.5
& $0.832\pms{0.018}$ & 62.2
& \underline{61.8} \\

w/o elapsed time
& $0.004\pms{0.003}$ & 33.3
& $0.241\pms{0.004}$ & 6.6
& $4.112\pms{0.008}$ & 3.9
& $2.168\pms{0.005}$ & 7.0
& $0.869\pms{0.019}$ & 69.4
& 24.0 \\
\bottomrule
\end{tabular}
\vspace{-2em}
\end{table*}

\textbf{Hyperparameter Search.} We conduct a grid search over key hyperparameters to identify the optimal configuration. Details of the search space and selected values are provided in the Appendix (\cref{appendix:hyperparameter_search}).

\section{Results and Discussion}
We present empirical results together with a detailed analysis and discussion on real-world irregular time series prediction benchmarks.

\textbf{Main Results.} \Cref{table:nextObsPrediction} reports and compares the predictive performance of CVTA with state-of-the-art irregular time series models on the next observation prediction task across five diverse benchmarks. 
Despite its recency-based state design, CVTA consistently delivers strong performance, achieving the best prediction error on three out of five datasets and remaining competitive on the other two.
Notably, CVTA yields the largest improvements on the challenging MIMIC-HiRes dataset, a high-dimensional NMAR (Not Missing At Random \cite{Ramoni1997Missingness,Singh1997Missingness}) setting, characterized by fine-grained temporal resolution and exhibiting the highest sparsity among all evaluated datasets. From all the recurrent models, GRUwE is competitive, while differential equation‑based approaches generally underperform relative to other baselines. Among the attention models, the message-passing attention models (HyperIMTS and GraFITi) offer competitive predictive performance.

\textbf{Contributions of CVTA Modules.} CVTA consists of three main modules: the CVTA state, VCEs, and the cross-variable temporal attention mechanism. First, to isolate the contribution of the attention layer design, we perform an ablation in which the same CVTA state is fed to a prediction function parameterized by stacked FFN layers, as described in Appendix~\ref{section:contribution_of_CVTA}. 
CVTA consistently outperforms the best FFN configuration across all benchmarks, showing that the CVTA state alone is not sufficient; the cross-variable temporal attention mechanism is essential for effectively extracting the predictive signal from this state.
We then ablate individual design choices of the attention model to assess their contributions: (i) replacing VCE with parameter-matched shared embeddings (`w/o VCE'); (ii) removing the slope attention branch (`w/o local slope'); (iii) removing the mask from the value embedding (`w/o value mask'); (iv) removing the mask from the key embedding (`w/o key mask'); (v) zeroing all elapsed times (`w/o elapsed time'). We include a detailed description of
the ablated CVTA variants and their results in the Appendix~\ref{appendix:cvta_ablation_result}. \Cref{tab:ablation_mse_drop_avg} quantifies the performance impact of each ablation by reporting both absolute error and relative degradation. Overall, every ablation reduces predictive performance, confirming that each ablated design choice contributes to CVTA’s effectiveness. The largest degradation occurs when VCE is replaced by shared embedding, with an average performance drop of 125\%, demonstrating that modeling variable-specific heterogeneity is central to CVTA’s gains. 
Removing value-mask embeddings results in an average degradation of 61\%, showing that explicit missingness information in the value representation is important. In contrast, removing key-mask embeddings causes a smaller but consistent degradation, suggesting that CVTA can partially recover missingness information through the value-mask and elapsed-time embeddings.

\begin{table*}[t]
\tiny
\centering
\setlength{\tabcolsep}{1.5pt}
\caption{Multi‑step prediction performance in MSE (we report mean MSE over five random seeds) across all datasets for prediction horizons $\mathrm{H} = \{1,3,5,7\}$ steps. '--' indicates GPU memory error encountered during training.}
\label{tab:cvta_extension_main_paper}
\begin{tabular}{l @{\hspace{1em}} cccc cccc cccc cccc cccc}
\toprule
\textbf{Method} 
& \multicolumn{4}{c}{\textbf{USHCN}}
& \multicolumn{4}{c}{\textbf{Physionet}}
& \multicolumn{4}{c}{\textbf{MIMIC-III}}
& \multicolumn{4}{c}{\textbf{MIMIC-IV}}
& \multicolumn{4}{c}{\textbf{MIMIC-HiRes}} \\
\cmidrule(lr){2-5}\cmidrule(lr){6-9}\cmidrule(lr){10-13}\cmidrule(lr){14-17}\cmidrule(lr){18-21}
&  H=1 & H=3 & H=5 & H=7
  & H=1 & H=3 & H=5 & H=7
  & H=1 & H=3 & H=5 & H=7
  & H=1 & H=3 & H=5 & H=7
  & H=1 & H=3 & H=5 & H=7 \\
\midrule

\textbf{CVTA}  &
\textbf{0.003} & \textbf{0.010} & \textbf{0.013} & \textbf{0.015} &
0.226 & 0.245 & 0.254 & 0.274 &
3.957 & 4.155 & 4.282 & 4.435 &
2.026 & 2.259 & 2.382 & 2.452 &
\textbf{0.513} & \textbf{0.832} & \textbf{0.913} & 1.041\\
\lightmidrule
\textbf{GRUwE (PM)} &
0.011 & 0.033 & 0.118 & 0.184 &
0.234 & 0.250 & 0.262 & 0.279 &
4.010 & 4.113 & 4.216 & 4.322 &
2.110 & 2.287 & 2.376 & 2.461 &
0.809 & 1.224 & 1.326 & 1.329 \\

\textbf{\textsc{GRUwE+CVTA}}  &
0.004 & 0.017 & 0.024 & 0.028 &
\textbf{0.211} & \textbf{0.230} & 0.239 & 0.258 &
3.783 & 3.925 & 4.039 & 4.151 &
2.015 & 2.191 & 2.285 & 2.334 &
0.604 & 0.931 & 0.959 & \textbf{0.989} \\
\textbf{Improv. ($\uparrow$\%)}&
+63.6 & +48.5 & +79.7 & +84.8 & +9.8 & +8.0 & +8.8 & +7.5 &
+5.7 & +4.6 & +4.2 & +4.0 &
+4.5 & +4.2 & +3.8 & +5.2 &
+25.3 & +23.9 & +27.7 & +25.6 \\
\lightmidrule
\textbf{GraFITi (PM)} &
0.072 & 0.081 & 0.099 & 0.113 &
0.234 & 0.243 & 0.247 & 0.256 &
4.059 & 4.083 & 4.216 & 4.264 &
2.146 & 2.277 & 2.370 & 2.371 &
1.409 & 1.445 & 1.462 & 1.703 \\

\textbf{\textsc{GraFITi+CVTA}} &
0.013 & 0.020 & 0.042 & 0.044 &
0.225 & 0.235 & \textbf{0.235} & 0.248 &
3.960 & 3.962 & 4.029 & 4.111 &
2.037 & 2.209 & 2.264 & 2.269 &
0.769 & 1.001 & 1.039 & 1.248 \\

\textbf{Improv. ($\uparrow$\%)} &
+81.9 & +75.3 & +57.6 & +61.1 & +3.8 & +3.3 & +4.9 & +3.1  & +2.4 & +3.0 & +4.4 & +3.6 &
+5.1 & +3.0 & +4.5 & +4.3 &
+45.4 & +30.7 & +28.9 & +26.7 \\
\lightmidrule
\textbf{HyperIMTS (PM)}  &
0.032 & 0.034 & 0.045 & 0.046 &
0.231 & 0.242 & 0.247 & 0.253 &
3.817 & 3.886 & 3.913 & 3.942 &
2.054 & 2.142 & 2.208 & 2.266 &
-- & -- & -- & -- \\

\textbf{\textsc{HyperIMTS+CVTA}}  &
0.010 & 0.011 & \textbf{0.013} & 0.016 &
0.226 & 0.234 & \textbf{0.235} & \textbf{0.247} &
\textbf{3.755} & \textbf{3.759} & \textbf{3.816} & \textbf{3.939} &
\textbf{1.978} & \textbf{2.119} & \textbf{2.179} & \textbf{2.215} &
-- & -- & -- & -- \\
\textbf{Improv. ($\uparrow$\%)} & 
+68.8 & +67.6 & +71.1 & +65.2 & +2.2 & +3.3 & +4.9 & +2.4 &
+1.6 & +3.3 & +2.5 & +0.1 &
+3.7 & +1.1 & +1.3 & +2.3 &
-- & -- & -- & -- \\

\bottomrule
\end{tabular}
\vspace{-2em}
\end{table*}
\textbf{CVTA as an Auxiliary Module.}
As demonstrated in \Cref{table:nextObsPrediction}, CVTA's focus on recent cross-variable temporal patterns provides a strong short-term inductive bias, allowing it to outperform or remain competitive with long-context models such as GRUwE, GraFITi, and HyperIMTS. Since these models already summarize broader temporal context, we hypothesize that CVTA can complement them by modeling local cross-variable dynamics. To test this, we integrate CVTA as an auxiliary predictor into each base model. For a fair comparison, we use parameter-matched base models with at least as many parameters as their CVTA-augmented counterparts. \Cref{tab:cvta_extension_main_paper} summarizes the performance of CVTA, the parameter-matched base models, and their CVTA-augmented variants across all datasets and prediction horizons. The results show two clear patterns: CVTA is strongest at shorter horizons, consistent with its local CVTA state design, and adding CVTA consistently improves every base model across longer horizons on all real-world datasets. The gains are largest on challenging datasets with long, sparse, and high-dimensional sequences, such as USHCN and MIMIC-HiRes. On Physionet, MIMIC-III, and MIMIC-IV, the improvements are smaller but still consistent across base models and horizons. These results confirm that CVTA provides complementary short-term predictive signal even when the base model is strong. We include additional implementation details in the Appendix~\ref{section:cvta_base_model_integration}.

\textbf{Skill Deterioration Analysis.}
CVTA is designed to provide accurate short-term predictions from a CVTA state based only on recent observations. To characterize the effective prediction horizon of CVTA's compact state, we adapt the forecast skill horizon framework \cite{ForecastSkillHorizon} and define skill deterioration horizon in Appendix~\ref{section:skill_deterioration_horizon}. On Physionet, we estimate variable-wise CVTA's skill deterioration horizons against the strongest baselines and report the conservative horizon where CVTA maintains the lowest MSE. 
\cref{fig:main_paper_efficiency_attn_scores}(c) shows that CVTA's skill horizon varies across variables: recent context is sufficient for at least 6 hours for Na and WBC, lasts for an intermediate range for BUN, pH, and Temp, and deteriorates earlier for Creatinine, GCS, and HCT, suggesting that longer temporal context may improve predictions for these variables.

\textbf{Computational Efficiency.} 
CVTA's state is designed for efficient computation in both batched settings, where the full sequence is available, and online settings, where observations arrive sequentially over time. During batched training, unlike recurrent models \cite{GRUDPaper,CRUPaper,GRUwE}, all the CVTA state components can be computed in parallel using vectorized operations (see Appendix~\ref{appendix:cvta_markov_state_computation} for implementation details) over the full observation sequence, yielding training times competitive with existing attention-based models. \Cref{fig:main_paper_efficiency_attn_scores}(a) illustrates the trade-off between training speed, memory use, and predictive accuracy on the Physionet benchmark. The plot reveals three distinct efficiency profiles: (i) differential‑equation models use minimal memory but train slowly; (ii) attention models train quickly at a high memory cost; and (iii) recurrent models occupy a middle ground with moderate memory and speed. CVTA resides in the high‑performance region of this Pareto frontier, matching the training efficiency of attention‑based approaches while delivering superior predictive performance. During online inference with streaming observations, the CVTA state design supports lightweight and recursive state updates using a fixed set of equations (i.e., \cref{eq:update_x,eq:update_mask,eq:update_delta,eq:slope_mask,eq:update_slope,eq:slope_mask_update}), that are triggered as each new observation arrives. This results in $\mathcal{O}(F)$ state memory, i.e., linear in the number of variables, and importantly, is independent of the sequence length, allowing seamless scaling to long or continuous data streams. We include the results on our online inference computation experiment in the Appendix~\ref{appendix:cvta_markov_state_computation} and visualize the results in \Cref{fig:main_paper_efficiency_attn_scores}(b). Together, these properties make CVTA well-suited for both batch training on historical data and for low-latency, real-time deployment (e.g., in ICU deployments) in the online settings.

\begin{figure}
    \centering
    \includegraphics[width=0.83\linewidth]{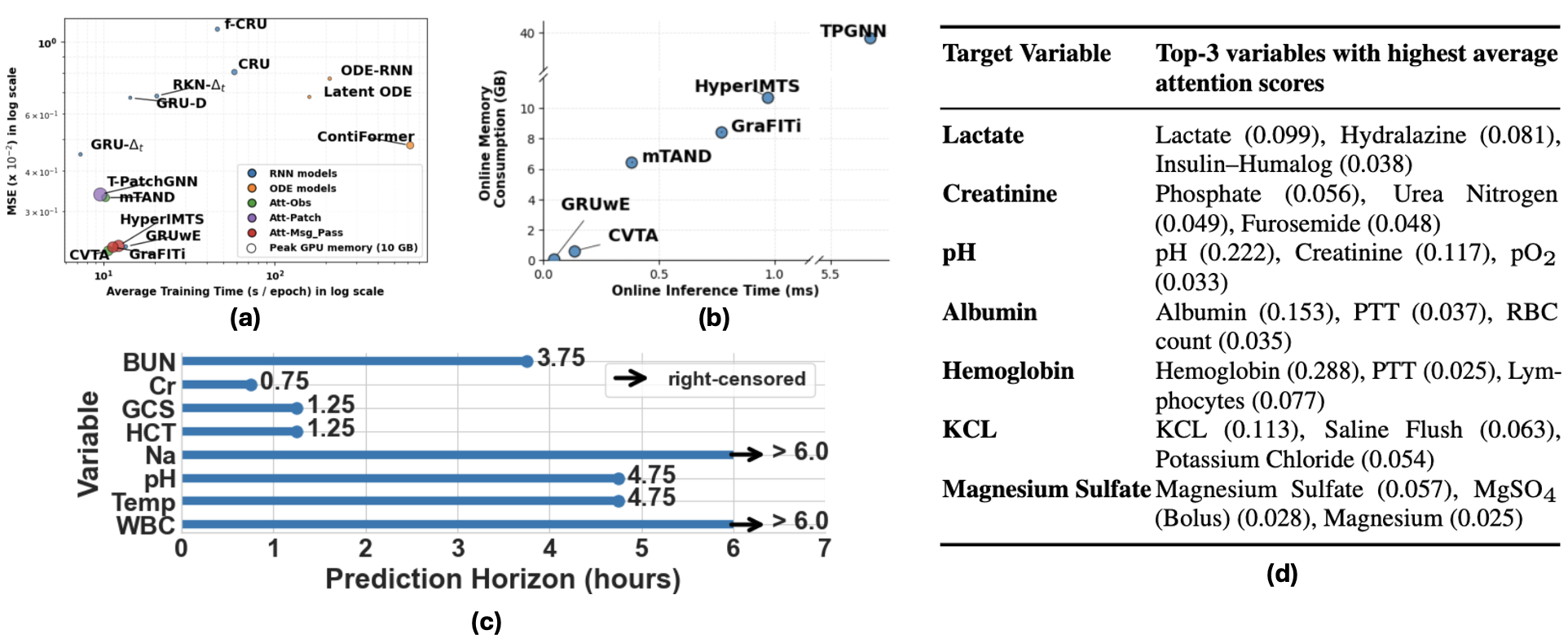}
    \vspace{-0.5em}
    \caption{\textbf{(a)} Trade-off between training efficiency, predictive performance, and peak memory usage during batched training. \textbf{(b)} Efficiency profile of online inference across top-performing models. \textbf{(c)} CVTA's skill deterioration horizon by variable ($\Delta T^*_{\min}$). \textbf{(d)} Top-attended variables and average attention scores.}
    \vspace{-2.5em} \label{fig:main_paper_efficiency_attn_scores}
\end{figure}

\textbf{Interpretability.} We evaluate CVTA's interpretability using the MIMIC-III dataset \cite{MIMICIIIDataset,GRUODEBayes}, which contains laboratory results and medication records. As CVTA generates predictions using interpretable attention scores, we analyze which variables receive the highest attention when forecasting specific clinical measurements and interventions. We summarize the results of this analysis in the Figure~\ref{fig:main_paper_efficiency_attn_scores}(d). CVTA's attention profiles are clinically meaningful: predictions often show strong self-attention, as in pH and Hemoglobin, reflecting local temporal continuity; capture known physiological links, such as Phosphate and Urea Nitrogen with Creatinine for renal monitoring; and connect semantically equivalent variables, such as KCL and Potassium Chloride. Most high-attention links align with clinical intuition, while less-expected associations, such as PTT and Albumin, may reflect limited data or unobserved confounding rather than causal physiology. The learned variable embeddings for MIMIC-III (\cref{fig:mimic_iii_emb}) and MIMIC-IV (\cref{fig:mimic_iv_emb}) also cluster clinically similar variables together.

\textbf{Case Studies.} We further examine two instance-level clinical scenarios involving high lactate and high creatinine predictions to assess the clinical plausibility of CVTA's attention patterns. In both cases, the attended variables align with plausible physiological mechanisms, including acid--base regulation, oxygenation, perfusion, renal filtration, and systemic illness burden. Detailed case-study descriptions, attention scores, and clinical interpretations are provided in Appendix~\ref{section:clinical_case_studies}.

\textbf{Attention Weight Faithfulness.}
Since we use CVTA's attention weights for interpretability, we test if these weights faithfully reflect the variables that influence the model’s predictions. Following prior work \cite{AttNotExp,SerranoSmith}, we perform attention-weight erasure by masking the top-3 or bottom-3 attended variables, renormalizing the remaining weights, and measuring the paired change in MSE. On Physionet, masking the top-3 variables increases MSE by $0.0417 \pm 0.0003$, while masking the bottom-3 variables produces a near-zero change of $-1.12\times 10^{-5} \pm 0.07\times 10^{-5}$, suggesting that CVTA's attention weights are indeed faithful to prediction. Details are provided in the Appendix~\ref{section:attn_weights_faithfulness}.

\textbf{Conclusion.}
In this work, we introduced Cross-Variable Temporal Attention (CVTA), a short-term forecasting framework for multivariate irregular time series. CVTA uses a compact, interpretable state, relies on variable-conditioned embeddings and cross-variable temporal attention to model heterogeneous local dynamics efficiently and in an interpretable manner. Experiments across diverse benchmarks show that CVTA achieves strong predictive performance, with variable-conditioned embeddings playing a central role in its gains. We further show that CVTA can serve as a complementary recency-based prediction model for state-of-the-art models through a dynamic meta-decision mechanism, consistently improving their performance.

\section{Acknowledgment}

The research work in this paper was supported in part by NIH grant R01 EB032752, and by the University of Pittsburgh Center for Research Computing and Data resource RRID:SCR 022735 using HTC cluster, which is supported by NIH award S10OD028483. The content of this paper is solely the responsibility of the authors and does not necessarily represent the official views of the NIH.

\bibliography{paper}

@inproceedings{AttNotExp,
  title={Attention is not explanation},
  author={Jain, Sarthak and Wallace, Byron C},
  booktitle={Proceedings of the 2019 Conference of the North American Chapter of the Association for Computational Linguistics: Human Language Technologies, Volume 1 (Long and Short Papers)},
  pages={3543--3556},
  year={2019}
}

@inproceedings{SerranoSmith,
  title={Is attention interpretable?},
  author={Serrano, Sofia and Smith, Noah A},
  booktitle={Proceedings of the 57th annual meeting of the association for computational linguistics},
  pages={2931--2951},
  year={2019}
}

@article{ForecastSkillHorizon,
  title={The forecast skill horizon},
  author={Buizza, Roberto and Leutbecher, Martin},
  journal={Quarterly Journal of the Royal Meteorological Society},
  volume={141},
  number={693},
  pages={3366--3382},
  year={2015},
  publisher={Wiley Online Library}
}

@InProceedings{CRUPaper,
  title = 	 {Modeling Irregular Time Series with Continuous Recurrent Units},
  author =       {Schirmer, Mona and Eltayeb, Mazin and Lessmann, Stefan and Rudolph, Maja},
  booktitle = 	 {Proceedings of the 39th International Conference on Machine Learning},
  pages = 	 {19388--19405},
  year = 	 {2022},
  editor = 	 {Chaudhuri, Kamalika and Jegelka, Stefanie and Song, Le and Szepesvari, Csaba and Niu, Gang and Sabato, Sivan},
  volume = 	 {162},
  series = 	 {Proceedings of Machine Learning Research},
  month = 	 {17--23 Jul},
  publisher =    {PMLR},
  url = 	 {https://proceedings.mlr.press/v162/schirmer22a.html}
}

@misc{GRUPaper,
  doi = {10.48550/ARXIV.1412.3555},
  url = {https://arxiv.org/abs/1412.3555},
  author = {Chung,  Junyoung and Gulcehre,  Caglar and Cho,  KyungHyun and Bengio,  Yoshua},
  title = {Empirical Evaluation of Gated Recurrent Neural Networks on Sequence Modeling},
  publisher = {arXiv},
  year = {2014},
  copyright = {arXiv.org perpetual,  non-exclusive license}
}

@article{GRUDPaper,
  title = {Recurrent Neural Networks for Multivariate Time Series with Missing Values},
  volume = {8},
  ISSN = {2045-2322},
  url = {http://dx.doi.org/10.1038/s41598-018-24271-9},
  DOI = {10.1038/s41598-018-24271-9},
  number = {1},
  journal = {Scientific Reports},
  publisher = {Springer Science and Business Media LLC},
  author = {Che,  Zhengping and Purushotham,  Sanjay and Cho,  Kyunghyun and Sontag,  David and Liu,  Yan},
  year = {2018},
  month = apr 
}

@inproceedings{LatentODEPaper,
 author = {Rubanova, Yulia and Chen, Ricky T. Q. and Duvenaud, David K},
 booktitle = {Advances in Neural Information Processing Systems},
 editor = {H. Wallach and H. Larochelle and A. Beygelzimer and F. d\textquotesingle Alch\'{e}-Buc and E. Fox and R. Garnett},
 pages = {},
 publisher = {Curran Associates, Inc.},
 title = {Latent Ordinary Differential Equations for Irregularly-Sampled Time Series},
 url = {https://proceedings.neurips.cc/paper_files/paper/2019/file/42a6845a557bef704ad8ac9cb4461d43-Paper.pdf},
 volume = {32},
 year = {2019}
}

@inproceedings{ContiFormer,
  title={ContiFormer: Continuous-Time Transformer for Irregular Time Series Modeling},
  author={Chen, Yuqi and Ren, Kan and Wang, Yansen and Fang, Yuchen and Sun, Weiwei and Li, Dongsheng},
  booktitle={Thirty-seventh Conference on Neural Information Processing Systems},
  year={2023}
}

@inproceedings{mTANDPaper,
title={Multi-Time Attention Networks for Irregularly Sampled Time Series},
author={Satya Narayan Shukla and Benjamin Marlin},
booktitle={International Conference on Learning Representations},
year={2021},
url={https://openreview.net/forum?id=4c0J6lwQ4_}
}

@inproceedings{GRUODEBayes,
 author = {De Brouwer, Edward and Simm, Jaak and Arany, Adam and Moreau, Yves},
 booktitle = {Advances in Neural Information Processing Systems},
 editor = {H. Wallach and H. Larochelle and A. Beygelzimer and F. d\textquotesingle Alch\'{e}-Buc and E. Fox and R. Garnett},
 pages = {},
 publisher = {Curran Associates, Inc.},
 title = {GRU-ODE-Bayes: Continuous Modeling of Sporadically-Observed Time Series},
 url = {https://proceedings.neurips.cc/paper_files/paper/2019/file/455cb2657aaa59e32fad80cb0b65b9dc-Paper.pdf},
 volume = {32},
 year = {2019}
}

@inproceedings{PhysionetDataset,
  title={Predicting in-hospital mortality of icu patients: The physionet/computing in cardiology challenge 2012},
  author={Silva, Ikaro and Moody, George and Scott, Daniel J and Celi, Leo A and Mark, Roger G},
  booktitle={2012 Computing in Cardiology},
  pages={245--248},
  year={2012},
  organization={IEEE}
}

@misc{MIMICIIIDataset,
  doi = {10.13026/CD7Z-WG25},
  url = {https://physionet.org/content/mimiciii/},
  author = {Johnson,  Alistair and Pollard,  Tom and Mark,  Roger},
  title = {MIMIC-III Clinical Database},
  publisher = {PhysioNet},
  year = {2023}
}

@techreport{USHCNDataset,
  title={Long-term daily and monthly climate records from stations across the contiguous United States (US Historical Climatology Network)},
  author={Menne, MJ and Williams Jr, CN and Vose, RS},
  year={2016},
  institution={Environmental System Science Data Infrastructure for a Virtual Ecosystem~…}
}

@inproceedings{AttentionPaper,
 author = {Vaswani, Ashish and Shazeer, Noam and Parmar, Niki and Uszkoreit, Jakob and Jones, Llion and Gomez, Aidan N and Kaiser, \L ukasz and Polosukhin, Illia},
 booktitle = {Advances in Neural Information Processing Systems},
 editor = {I. Guyon and U. Von Luxburg and S. Bengio and H. Wallach and R. Fergus and S. Vishwanathan and R. Garnett},
 pages = {},
 publisher = {Curran Associates, Inc.},
 title = {Attention is All you Need},
 url = {https://proceedings.neurips.cc/paper_files/paper/2017/file/3f5ee243547dee91fbd053c1c4a845aa-Paper.pdf},
 volume = {30},
 year = {2017}
}

@article{NeuralODE,
  title={Neural ordinary differential equations},
  author={Chen, Ricky TQ and Rubanova, Yulia and Bettencourt, Jesse and Duvenaud, David K},
  journal={Advances in neural information processing systems},
  volume={31},
  year={2018}
}

@article{NeuralCDEPaper,
  title={Neural controlled differential equations for irregular time series},
  author={Kidger, Patrick and Morrill, James and Foster, James and Lyons, Terry},
  journal={Advances in Neural Information Processing Systems},
  volume={33},
  pages={6696--6707},
  year={2020}
}

@inproceedings{Raindrop,
title = {Graph-Guided Network For Irregularly Sampled Multivariate Time Series},
author = {Zhang, Xiang and Zeman, Marko and Tsiligkaridis, Theodoros and Zitnik, Marinka},
booktitle = {International Conference on Learning Representations, ICLR},
year      = {2022}
}

@article{NeuralHawkes,
  title={The neural hawkes process: A neurally self-modulating multivariate point process},
  author={Mei, Hongyuan and Eisner, Jason M},
  journal={Advances in neural information processing systems},
  volume={30},
  year={2017}
}

@inproceedings{TransformerHawkes,
  title={Transformer hawkes process},
  author={Zuo, Simiao and Jiang, Haoming and Li, Zichong and Zhao, Tuo and Zha, Hongyuan},
  booktitle={International conference on machine learning},
  pages={11692--11702},
  year={2020},
  organization={PMLR}
}

@InProceedings{RKNPaper,
  title = 	 {Recurrent Kalman Networks: Factorized Inference in High-Dimensional Deep Feature Spaces},
  author =       {Becker, Philipp and Pandya, Harit and Gebhardt, Gregor and Zhao, Cheng and Taylor, C. James and Neumann, Gerhard},
  booktitle = 	 {Proceedings of the 36th International Conference on Machine Learning},
  pages = 	 {544--552},
  year = 	 {2019},
  editor = 	 {Chaudhuri, Kamalika and Salakhutdinov, Ruslan},
  volume = 	 {97},
  series = 	 {Proceedings of Machine Learning Research},
  month = 	 {09--15 Jun},
  publisher =    {PMLR},
  url = 	 {https://proceedings.mlr.press/v97/becker19a.html}
}

@article{Singh1997Missingness,
  title={Learning Bayesian networks from incomplete data},
  author={Singh, Moninder},
  journal={AAAI/IAAI},
  volume={1001},
  pages={534--539},
  year={1997},
  publisher={Citeseer}
}

@inproceedings{Ramoni1997Missingness,
  title={Robust parameter learning in Bayesian networks with missing data},
  author={Ramoni, Marco and Sebastiani, Paola},
  booktitle={Sixth International Workshop on Artificial Intelligence and Statistics},
  pages={399--406},
  year={1997},
  organization={PMLR}
}

@inproceedings{TPGNN,
  title={Irregular Multivariate Time Series Forecasting: A Transformable Patching Graph Neural Networks Approach},
  author={Zhang, Weijia and Yin, Chenlong and Liu, Hao and Zhou, Xiaofang and Xiong, Hui},
  booktitle={Forty-first International Conference on Machine Learning},
  year={2024}
}

@inproceedings{grafitiPaper,
  title={GraFITi: Graphs for Forecasting Irregularly Sampled Time Series},
  author={Yalavarthi, Vijaya Krishna and Madhusudhanan, Kiran and Scholz, Randolf and Ahmed, Nourhan and Burchert, Johannes and Jawed, Shayan and Born, Stefan and Schmidt-Thieme, Lars},
  booktitle={Proceedings of the AAAI Conference on Artificial Intelligence},
  volume={38},
  number={15},
  pages={16255--16263},
  year={2024}
}

@inproceedings{SAHP,
  title={Self-attentive Hawkes process},
  author={Zhang, Qiang and Lipani, Aldo and Kirnap, Omer and Yilmaz, Emine},
  booktitle={International conference on machine learning},
  pages={11183--11193},
  year={2020},
  organization={PMLR}
}

@inproceedings{RMTPP,
  title={Recurrent marked temporal point processes: Embedding event history to vector},
  author={Du, Nan and Dai, Hanjun and Trivedi, Rakshit and Upadhyay, Utkarsh and Gomez-Rodriguez, Manuel and Song, Le},
  booktitle={Proceedings of the 22nd ACM SIGKDD international conference on knowledge discovery and data mining},
  pages={1555--1564},
  year={2016}
}

@inproceedings{HyperIMTS,
  title={HyperIMTS: Hypergraph Neural Network for Irregular Multivariate Time Series Forecasting},
  author={Li, Boyuan and Luo, Yicheng and Liu, Zhen and Zheng, Junhao and Lv, Jianming and Ma, Qianli},
  booktitle={Forty-second International Conference on Machine Learning},
  year={2025}
}

@inproceedings{hiPatch,
  title={Hi-Patch: Hierarchical Patch GNN for Irregular Multivariate Time Series},
  author={Luo, Yicheng and Zhang, Bowen and Liu, Zhen and Ma, Qianli},
  booktitle={Forty-second International Conference on Machine Learning},
  year={2025}
}

@misc{
GRUwE,
title={Still Competitive: Revisiting Recurrent Models for Irregular Time Series Prediction},
author={Ankitkumar Joshi and Milos Hauskrecht},
journal={Transactions on Machine Learning Research},
issn={2835-8856},
year={2026},
note={Transactions on Machine Learning Research, 2026},
}

@article{MIMIC_IV,
  title={MIMIC-IV, a freely accessible electronic health record dataset},
  author={Johnson, Alistair EW and Bulgarelli, Lucas and Shen, Lu and Gayles, Alvin and Shammout, Ayad and Horng, Steven and Pollard, Tom J and Hao, Sicheng and Moody, Benjamin and Gow, Brian and others},
  journal={Scientific data},
  volume={10},
  number={1},
  pages={1},
  year={2023},
  publisher={Nature Publishing Group UK London}
}
\bibliographystyle{achemso}
\newpage
\appendix
\addcontentsline{toc}{section}{Appendix} 


\part{Appendix}
\parttoc

\section{Limitation \& Future Work}
As an attention‑based model, CVTA inherits certain architectural constraints. Most notably, its cross‑variable attention scales quadratically in the number of variables, which may become a bottleneck in extremely high‑dimensional settings. While CVTA’s compact state avoids sequence‑length scaling, variable‑count scaling remains. A promising direction to mitigate this limitation is to incorporate domain‑informed attention constraints. By masking out implausible or irrelevant variable dependencies (e.g., based on causal graphs or clinical knowledge), the effective attention complexity can be reduced while simultaneously improving generalization and interpretability. We view the integration of such structured prior knowledge as an exciting avenue for future work.

\section{Broader Impact}
This work develops forecasting methods for irregular multivariate time series, with potential positive impact in domains such as healthcare, climate monitoring, and large-scale sensing, where accurate short-term predictions can support earlier detection and more informed decision-making. However, incorrect predictions in high-stakes settings could lead to inappropriate interventions. Deployment would require careful validation, calibration, monitoring for distribution shift, and domain-expert oversight, especially in clinical applications.

\section{Variable-Conditioned Embeddings}
\subsection{Motivation}
\label{appendix:motivation_variable_cond_emb}
Real-world multivariate time series consist of variables derived from processes with fundamentally different underlying dynamics. Consequently, the observed variables exhibit substantial heterogeneity across several key dimensions: \begin{itemize}[noitemsep,topsep=1pt,partopsep=1pt] 
    \item \textit{Statistical properties}: Variables may be drawn from fundamentally different underlying distributions, exhibiting different ranges, variances, and higher-order moments. 
    \item \textit{Semantic interpretation of values}: The same numerical value may convey different clinical or operational meanings across variables. For example, in a clinical setting, a value considered normal for one vital sign may indicate a critical value for a laboratory recording. 
    \item \textit{Variable-specific noise}: Since multi-variate streams are collected from different sensors, measurement precision, sensor quality and intrinsic noise levels differ across recorded variable streams. 
    \item \textit{Temporal sampling}: Variables are often recorded at different frequencies, with some streams densely sampled and others arriving sporadically or at near-fixed intervals. 
    \item \textit{Predictive influence}: The temporal influence of a variable on future recordings can vary widely. For example, a laboratory result may have a longer predictive relevance, while a vital sign may have a shorter, more limited predictive influence over time. 
\end{itemize}

\vspace{0.5cm}

\subsection{Limitations of Shared Embeddings}
\begin{proposition}[\textbf{Shared embeddings collapse variable identity}]
\label{prop:shared_emb_collapse}
In the recent-observation attention branch of CVTA with shared embeddings, any two source variables with identical scalar CVTA state components receive identical key-value representations.
Consequently, every target query assigns them identical attention logits.
Thus, shared embeddings cannot distinguish variables with identical state values but different semantics or predictive influences.
\end{proposition}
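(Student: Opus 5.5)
The plan is to make the shared-embedding variant precise and then verify the claims by direct substitution. In the shared setting, each operator $\phi^{(f)}_{\bullet}$ is replaced by a single operator $\phi_{\bullet}(z)=\mathbf{w}_{\bullet}z+\mathbf{b}_{\bullet}$ that does not depend on $f$, for $\bullet\in\{x,\delta_r,m\}$. The queries $\mathbf{q}_r^{(f)}=\mathbf{E}_r^\top\mathbf{c}^{(f)}$ are left as in \cref{eq:attn_over_rec_obs}, since the proposition only concerns the source side (keys and values). Fix two source variables $f\neq f'$ with identical scalar state components, $x_{t,f}=x_{t,f'}$, $m_{t,f}=m_{t,f'}$ and $\delta_{t,f}=\delta_{t,f'}$. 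Because the horizon $\Delta T$ is shared, this also gives $\delta'_{t,f}=\delta'_{t,f'}$.

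First I will show the key--value representations coincide. Each embedding is a function of the scalar input only: $\mathbf{e}^{(f)}_{\delta_r}=\sin(\phi_{\delta_r}(\delta'_{t,f}))=\sin(\phi_{\delta_r}(\delta'_{t,f'}))=\mathbf{e}^{(f')}_{\delta_r}$, and likewise $\mathbf{e}^{(f)}_m=\mathbf{e}^{(f')}_m$ and $\mathbf{e}^{(f)}_x=\mathbf{e}^{(f')}_x$. Summing as in \cref{eq:attn_over_rec_obs} gives $\mathbf{k}_r^{(f)}=\mathbf{k}_r^{(f')}$ and $\mathbf{v}_r^{(f)}=\mathbf{v}_r^{(f')}$.

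Next, for any target query $\mathbf{q}_r^{(j)}$, scaled dot-product attention gives logits $\ell_{j,f}=\langle\mathbf{q}_r^{(j)},\mathbf{k}_r^{(f)}\rangle/\sqrt{D}$, so $\ell_{j,f}=\ell_{j,f'}$. Since the softmax normalizer is common to a row, the attention weights $a_{j,f}=a_{j,f'}$ are equal too. The contribution of these two sources to row $j$ of $\mathbf{H}_r$ is then $a_{j,f}(\mathbf{v}_r^{(f)}+\mathbf{v}_r^{(f')})$, which is symmetric in $f$ and $f'$. Swapping the roles of $f$ and $f'$ therefore leaves every row of $\mathbf{H}_r$ unchanged. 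The downstream FFN, LayerNorm and head are applied position-wise on the query side, so the $r$-branch output $\hat{\mathbf{y}}_r$ is invariant under this exchange.

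Finally, I will conclude the indistinguishability claim. Suppose $f$ and $f'$ have different semantics or predictive influence, so that an ideal predictor should weight them differently for some target $j$. No setting of the shared parameters can assign them different attention, or make the branch's output depend on which of the two carries which identity. The only step needing care is the modeling convention: we must state that "shared embeddings" means the key and value embeddings carry no variable-identity input, for example no additive per-variable positional code. Once that is stated, the argument is pure substitution, with no analytic obstacle. I would also note that the same argument applies verbatim to the slope branch with $(\dot{x},g,\delta_s)$.
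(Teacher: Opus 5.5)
Your proposal is correct and follows the same direct-substitution argument as the paper: shared affine maps applied to identical scalars give identical $\mathbf{e}_x,\mathbf{e}_m,\mathbf{e}_{\delta_r}$. Hence the keys and values coincide, and so do the logits $(\mathbf{q}_r^{(j)})^\top\mathbf{k}_r^{(f)}$ for every query. Your extra steps are valid refinements that the paper leaves implicit: noting $\delta'_{t,f}=\delta'_{t,f'}$ because $\Delta T$ is common, deducing equal softmax weights and invariance of $\mathbf{H}_r$, and stating explicitly that shared embeddings carry no per-variable input.
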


\begin{proof}
Under shared embeddings, the same embedding functions are applied to every variable. Therefore, identical scalar state components imply identical embeddings:
\[
e_x^{(a)} = e_x^{(b)}, 
\qquad
e_m^{(a)} = e_m^{(b)},
\qquad
e_{\delta_r}^{(a)} = e_{\delta_r}^{(b)}.
\]
The recent-observation branch defines keys and values as
\[
k_r^{(f)} = e_{\delta_r}^{(f)} + e_m^{(f)},
\qquad
v_r^{(f)} = e_x^{(f)} + e_m^{(f)}.
\]
Substituting the equal embeddings for variables $a$ and $b$ gives
\[
k_r^{(a)} = k_r^{(b)},
\qquad
v_r^{(a)} = v_r^{(b)}.
\]
Hence, for any target query $q_r^{(f)}$,
\[
(q_r^{(f)})^\top k_r^{(a)}
=
(q_r^{(f)})^\top k_r^{(b)}.
\]
Therefore, under shared embeddings, the attention mechanism cannot distinguish variables $a$ and $b$ through their key-value representations whenever their scalar state components are identical.
\end{proof}

\textbf{Limitations of Shared Embeddings.} In CVTA, each variable is summarized by scalar CVTA state components such as recent value, elapsed time, mask, and slope. If embeddings are shared across variables, then two variables with the same scalar state receive identical key-value representations. That means the attention mechanism cannot distinguish them, even if they represent completely different processes. This is a serious limitation for multivariate time series, where variables can have different semantics, scales, noise levels, sampling frequencies, and predictive roles.

\textbf{Expressivity of VCEs.}
VCEs extend shared embeddings by allowing each variable to learn its own embedding map. Shared embeddings correspond to the special case where these maps are tied across variables, whereas VCEs can assign different representations to the same scalar value depending on the variable identity.

\begin{figure*}[t]
    \centering

    \begin{subfigure}[t]{0.32\textwidth}
        \centering
        \includegraphics[width=\linewidth]{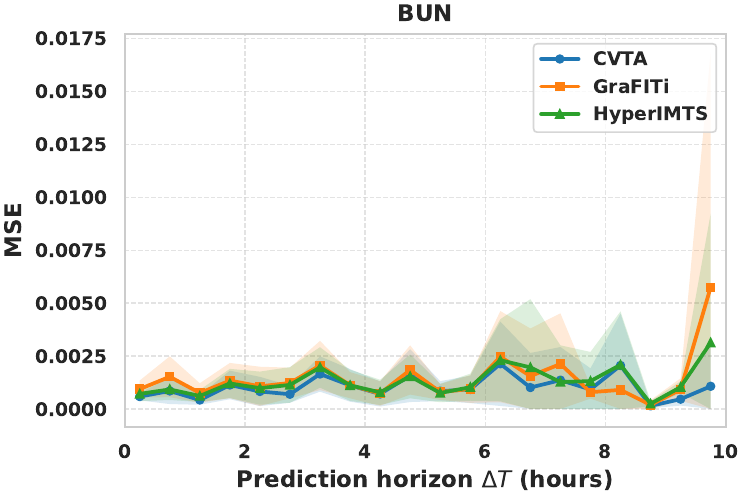}
        \caption{BUN}
        \label{fig:markov_boundary_bun}
    \end{subfigure}
    \hfill
    \begin{subfigure}[t]{0.32\textwidth}
        \centering
        \includegraphics[width=\linewidth]{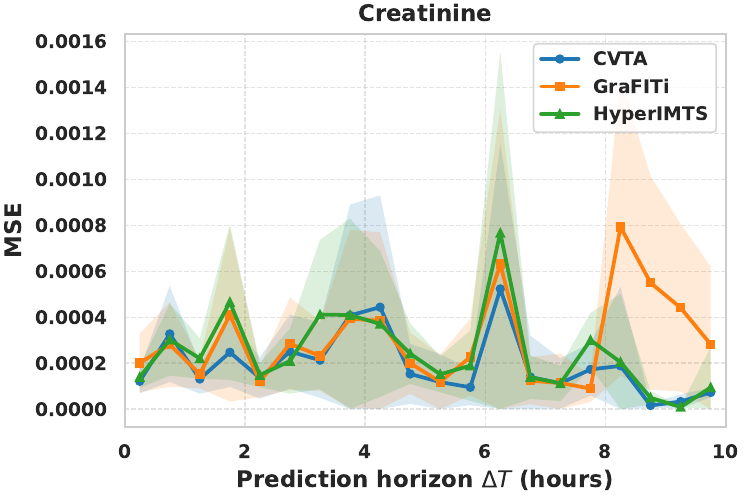}
        \caption{Creatinine}
        \label{fig:markov_boundary_creatinine}
    \end{subfigure}
    \hfill
    \begin{subfigure}[t]{0.32\textwidth}
        \centering
        \includegraphics[width=\linewidth]{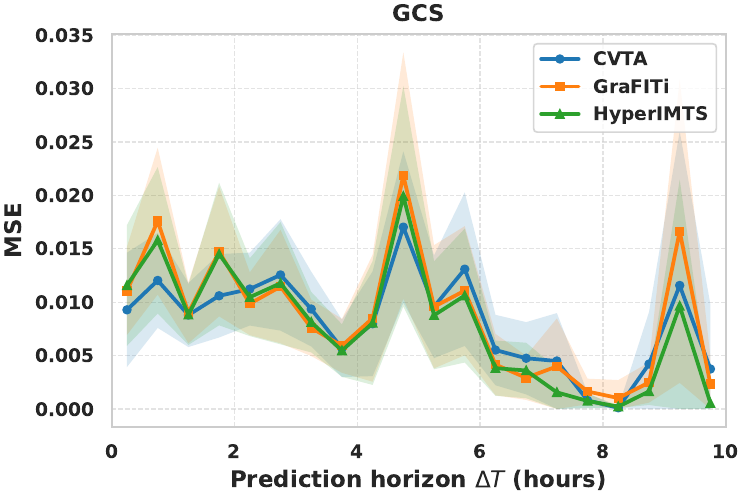}
        \caption{GCS}
        \label{fig:markov_boundary_gcs}
    \end{subfigure}

    \vspace{0.5em}

    \begin{subfigure}[t]{0.32\textwidth}
        \centering
        \includegraphics[width=\linewidth]{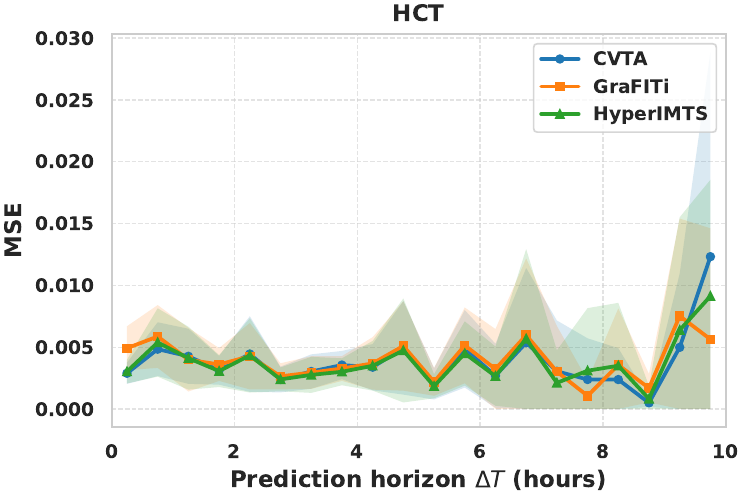}
        \caption{HCT}
        \label{fig:markov_boundary_hct}
    \end{subfigure}
    \hfill
    \begin{subfigure}[t]{0.32\textwidth}
        \centering
        \includegraphics[width=\linewidth]{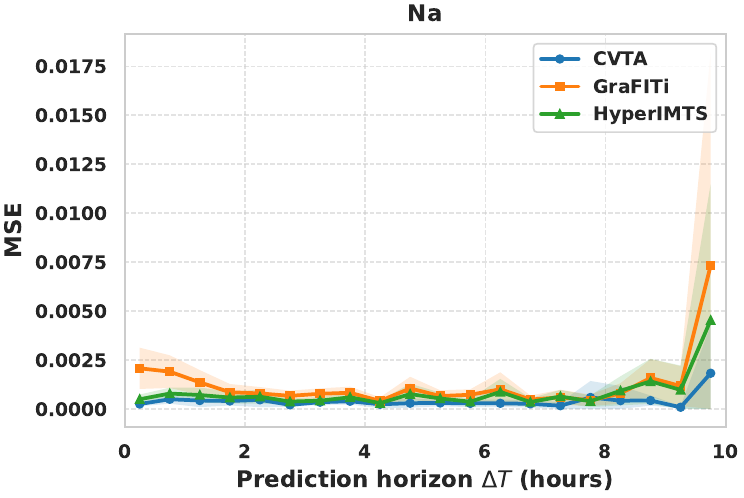}
        \caption{Na}
        \label{fig:markov_boundary_na}
    \end{subfigure}
    \hfill
    \begin{subfigure}[t]{0.32\textwidth}
        \centering
        \includegraphics[width=\linewidth]{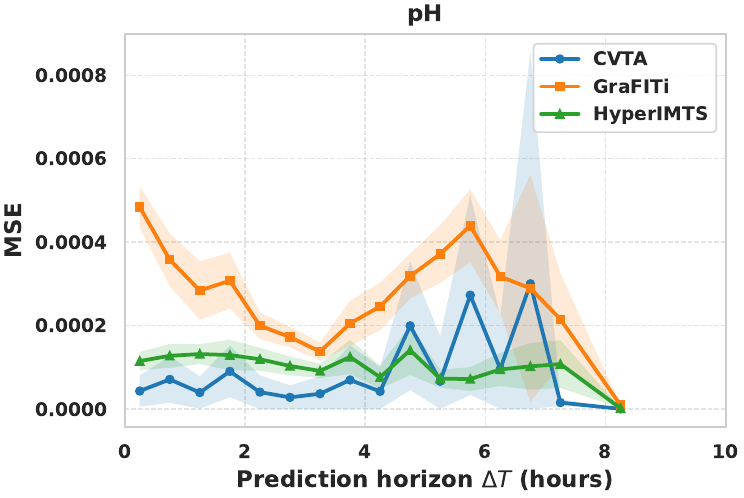}
        \caption{pH}
        \label{fig:markov_boundary_ph}
    \end{subfigure}

    \vspace{0.5em}

    \makebox[\textwidth][c]{%
        \begin{subfigure}[t]{0.32\textwidth}
            \centering
            \includegraphics[width=\linewidth]{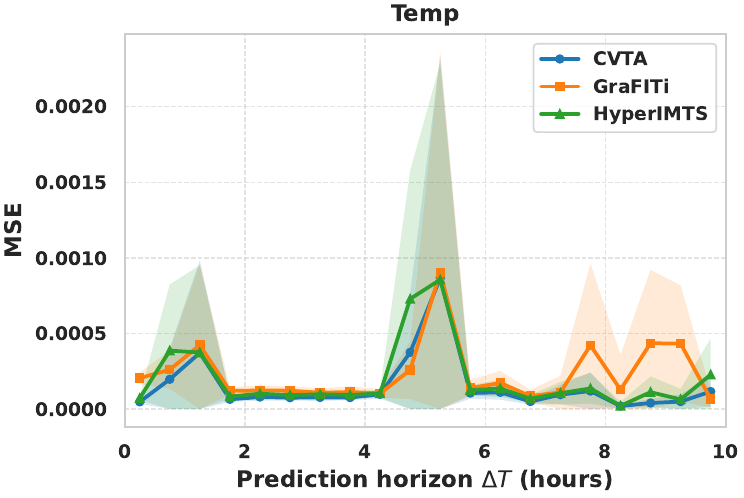}
            \caption{Temp}
            \label{fig:markov_boundary_temp}
        \end{subfigure}
        \hspace{0.03\textwidth}
        \begin{subfigure}[t]{0.32\textwidth}
            \centering
            \includegraphics[width=\linewidth]{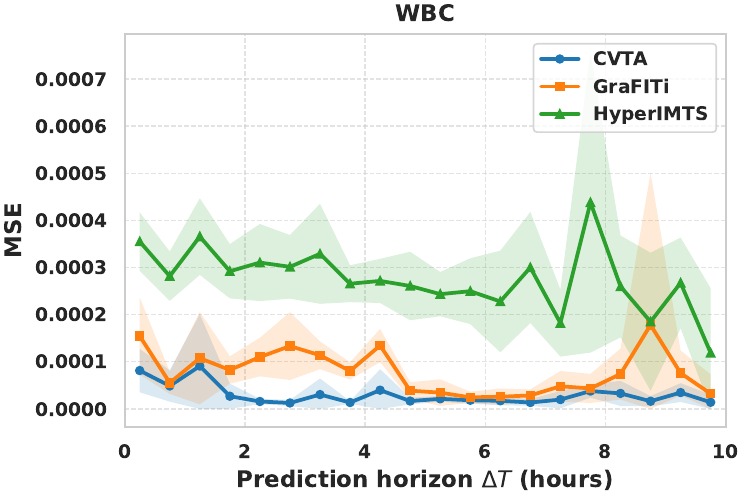}
            \caption{WBC}
            \label{fig:markov_boundary_wbc}
        \end{subfigure}
    }

    \caption{MSE profiles for CVTA, GraFITi and HyperIMTS for representative variables on the Physionet benchmark.}
    \label{fig:markov_boundary_selected_channels}
\end{figure*}

\section{Skill Deterioration Analysis}
\label{section:skill_deterioration_horizon}
CVTA is a short-term prediction model that represents recent observations using a state consisting of recent values, elapsed times, and local slopes. This design enables efficient and interpretable prediction, but it also raises a key question: \textit{for how long does the CVTA state remain sufficiently informative?} To characterize this boundary, we adapt the forecast skill horizon framework \cite{ForecastSkillHorizon} and measure CVTA's forecast skill relative to competitive full-history reference models. For a prediction horizon $\Delta T$ and a trained reference model $\mathrm{ref}$, we define CVTA's forecasting skill as:

\begin{equation}
    \mathrm{Skill}(\Delta T;\mathrm{ref})
    =
    1 -
    \frac{
        \mathrm{MSE}_{\mathrm{CVTA}}(\Delta T)
    }{
        \mathrm{MSE}_{\mathrm{ref}}(\Delta T)
    } .
    \label{eq:forecast_skill}
\end{equation}
Using the forecast skill curve, we define the skill deterioration horizon, $\Delta T^*_{\mathrm{ref}}$, as the earliest prediction horizon at which CVTA no longer outperforms the reference model:
\begin{equation}
    \Delta T^\star_{\mathrm{ref}}
    =
    \inf\Bigl\{
        \Delta T \;:\;
        \mathrm{Skill}(\Delta T; \mathrm{ref}) \le 0
    \Bigr\}.
\end{equation}

On the Physionet holdout set, we evaluate long-term forecasting using models trained with prediction horizon $H=7$ steps. We aggregate predictions from CVTA, GRUwE, GraFITi, and HyperIMTS by variable type and prediction horizon. In particular, for each variable type, we bin valid targets into 30-minute prediction-horizon bins and compute the MSE within each bin. We report the skill deterioration horizon, defined as the earliest bin midpoint at which CVTA no longer has positive skill relative to a reference model. If CVTA maintains positive skill over all valid bins, we report the result as right-censored at the last valid horizon. Due to higher sparsity in some variables and to avoid unstable estimates, bins with fewer than 20 valid targets are excluded from the skill deterioration analysis. 
Table~\ref{tab:markov_boundary_selected_channels} reports the resulting skill deterioration horizons with respect to GRUwE, GraFITi, and HyperIMTS for representative variables. Figure~\ref{fig:markov_boundary_selected_channels} shows the corresponding MSE profiles where shaded regions denote 95\% confidence intervals, computed within each bin as 1.96 times the standard error of the squared errors.

\begin{table}[H]
\centering
\small
\caption{Skill deterioration horizons of CVTA relative to GraFITi, GRUwE, and HyperIMTS models for representative variables on the Physionet benchmark. $\Delta T^\star_{\min}$ denotes the earliest deterioration horizon across the three reference models and provides a conservative estimate of CVTA's competitive prediction horizon.}
\label{tab:markov_boundary_selected_channels}
\begin{tabular}{lcccc}
\toprule
\textbf{Variable} &
$\Delta T^\star_{\mathrm{GraFITi}}$ &
$\Delta T^\star_{\mathrm{GRUwE}}$ &
$\Delta T^\star_{\mathrm{HyperIMTS}}$ &
$\Delta T^\star_{\min}$ \\
&
(hours) &
(hours) &
(hours) &
(hours) \\
\midrule
\textbf{BUN}        & $5.25$ & $5.75$ & $3.75$ & $3.75$ \\
\textbf{Creatinine} & $0.75$ & $>6.0$ & $0.75$ & $0.75$ \\
\textbf{GCS}        & $2.25$ & $1.25$ & $1.25$ & $1.25$ \\
\textbf{HCT}        & $1.25$ & $1.25$ & $1.25$ & $1.25$ \\
\textbf{Na}         & $>6.0$ & $>6.0$ & $>6.0$ & $>6.0$ \\
\textbf{pH}         & $>5.0$ & $>5.0$ & $4.75$ & $4.75$ \\
\textbf{Temp}       & $4.75$ & $4.75$ & $5.25$ & $4.75$ \\
\textbf{WBC}        & $>6.0$ & $>6.0$ & $>6.0$ & $>6.0$ \\
\bottomrule
\end{tabular}
\end{table}
The conservative horizon $\Delta T^\star_{\min}$ shows that CVTA's skill horizon varies across variables, highlighting variable-level heterogeneity. For variables such as Na and WBC, CVTA maintains positive skill against all full-history baselines over all valid bins, indicating that recent values, elapsed times, and local slopes remain predictive for at least six hours. BUN, pH, and Temp show intermediate horizons, where CVTA remains competitive for several hours before losing advantage against at least one reference model. CVTA's prediction skill for Creatinine, GCS, and HCT deteriorate earlier, suggesting that longer temporal context may improve predictions for these variables.

\section{CVTA Computational Efficiency}
\label{appendix:cvta_markov_state_computation}
\subsection{Efficient CVTA State Computation}
Below we provide efficient, parallelizable pseudocode for computing the CVTA state when the full observation sequence is available in a batched setting.

\begin{algorithm}[H]
\caption{Parallel construction of the CVTA state $\mathcal{S}_{t,f} = (x_{t,f}, m_{t,f}, \delta_{t,f}, \dot{x}_{t,f}, g_{t,f})$}
\label{alg:markov_state_parallel}
\begin{algorithmic}[1]
\REQUIRE Observation values $\mathbf{X} \in \mathbb{R}^{B \times T \times F}$, mask $\mathbf{M} \in \{0,1\}^{B \times T \times F}$, timestamps $\mathbf{T'} \in \mathbb{R}^{B \times T}$
\ENSURE CVTA state components for all $t \in [1,T]$, $f \in [1,F]$
\STATE \textbf{Initialize} cumulative observation count: $\mathbf{K} \leftarrow \text{cumsum}(\mathbf{M}, \text{axis}=1)$
\STATE \textbf{Observation mask}: $m_{t,f} \leftarrow \mathbb{I}[K_{t,f} > 0]$
\STATE \textbf{Slope mask}: $g_{t,f} \leftarrow \mathbb{I}[K_{t,f} \ge 2]$
\STATE \textbf{Last observation index}: $\tau_1 \leftarrow \text{argmax}_{\tau \le t} \{M_{\tau,f} = 1\}$ (vectorized)
\STATE \textbf{Second‑last observation index}: $\tau_2 \leftarrow \text{argmax}_{\tau < \tau_1} \{M_{\tau,f} = 1\}$ 
\STATE \textbf{Last value}: $x_{t,f} \leftarrow \mathbf{X}_{\tau_1,f}$
\STATE \textbf{Elapsed Time}: $\delta_{t,f} \leftarrow \mathbf{T'}_{t} - \mathbf{T'}_{\tau_1}$
\STATE \textbf{Local slope}: $\dot{x}_{t,f} \leftarrow g_{t,f} \cdot \dfrac{\mathbf{X}_{\tau_1,f} - \mathbf{X}_{\tau_2,f}}{\mathbf{T'}_{\tau_1} - \mathbf{T'}_{\tau_2}}$
\STATE \textbf{Return} $\mathcal{S}_{t,f} = (x_{t,f}, m_{t,f}, \delta_{t,f}, \dot{x}_{t,f}, g_{t,f})$
\end{algorithmic}
\end{algorithm}

\subsection{Online Inference}
To evaluate models in a realistic deployment scenario, we adopt the online inference protocol described by \cite{GRUwE} for simulating an ICU early warning system. In this setup, models process a continuous stream of retrospective EHR data in a stepwise fashion, mimicking real-time clinical monitoring. We replicate their experimental conditions using the Physionet dataset, streaming observations from 100 randomly selected patients and measuring total inference time and peak GPU memory usage during inference. Following their method, we compare CVTA against several strong baselines that excel at next‑observation prediction: mTAND, GraFITi, T‑PatchGNN, and HyperIMTS. In our experiments, we find that models like mTAND, GraFITi, T‑PatchGNN and HyperIMTS must buffer the entire observation history during inference, leading to time and memory costs that grow with sequence length. In contrast, models with a compact fixed-size state representation (e.g., GRUwE, CVTA) maintain constant inference cost independent of history length, making them substantially more efficient for online deployment, evident from \cref{fig:online_computational_cost}.
\begin{figure}[H]
    \centering
    \includegraphics[width=0.4\linewidth]{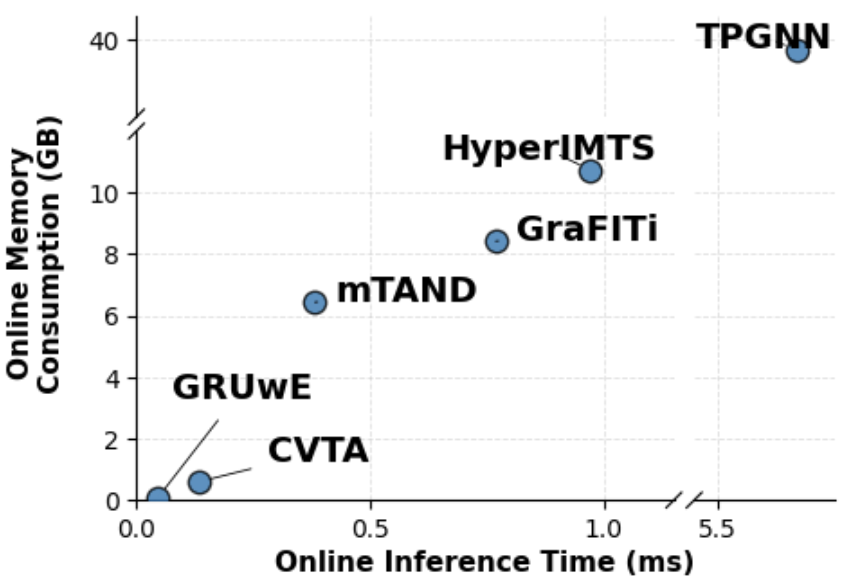}
    \caption{Efficiency of online inference. Scatter plot of inference latency versus GPU memory usage. Models without compact fixed-size state (top‑right) require buffering past observations, leading to higher computational costs, while models that support compact fixed-size states (bottom‑left) maintain constant, low overhead.}
    \label{fig:online_computational_cost}
\end{figure}

\section{Datasets}
\label{appendix:datasets}
\subsection{USHCN}
\noindent\textbf{URL.} \url{https://data.ess-dive.lbl.gov/view/doi%3A10.3334%2FCDIAC%2FCLI.NDP019}

\textbf{Description.} The United States Historical Climatology Network (USHCN) \cite{USHCNDataset} provides daily measurements of five meteorological variables---minimum temperature, maximum temperature, precipitation, snowfall, and snow depth---recorded at 1,218 stations across the United States. Multiple derived versions of this dataset have been used to benchmark irregularly sampled time series models.

\noindent\textbf{Pre‑processing.} This is the only Missing Completely At Random (MCAR) dataset in our benchmark. Following the protocol of \citet{CRUPaper}, we first select 1,168 stations over the four‑year period 1990--1993. To increase irregularity and sparsity, we subsample 50\% of the time points and mask an additional 20\% of the remaining observations. The data are split into 80\% for training/validation and 20\% for testing; from the training portion, 25\% is held out for validation.

\subsection{Physionet}
\noindent\textbf{URL.} \url{https://physionet.org/files/challenge-2012/1.0.0/}

\textbf{Description.} The PhysioNet/Computing in Cardiology Challenge 2012 \cite{PhysionetDataset} contains 8,000 ICU patient stays spanning 48 hours, each with 37 clinical time‑series variables recorded at irregular intervals. Variables include Non‑Invasive Mean Arterial Pressure, Platelets, Sodium, and numerous other vital signs and lab measurements.

\noindent\textbf{Pre‑processing.} We adopt the pre‑processing described by \citet{CRUPaper}: observation times are quantized to 6‑minute intervals. The data are split into 80\% for training/validation and 20\% for testing, with 25\% of the training portion reserved for validation.

\subsection{MIMIC-III \& MIMIC-HiRes}
\noindent\textbf{URL.} \url{https://physionet.org/content/mimiciii/1.4/}

\textbf{Description.} We extract the MIMIC‑III‑HiRes cohort from the MIMIC‑III database \cite{MIMICIIIDataset}. The cohort comprises 10,265 hospital admissions (8,799 unique patients) selected by: (i) availability in the MetaVision critical care system, (ii) admission length between 2 and 20 days, and (iii) patient age between 18 and 90 years.

\textbf{MIMIC-III Pre-processing.} Following the pre-processing routines described in \cite{GRUODEBayes}, we collect 96 univariate irregular time series from laboratory results and medication records. Each patient record spans 48 hours, observations in a 30 minutes time span are aggregated and constitute one sample sequence for training and evaluation. Models consume first 36 hours as context to make predictions at future time points. The data are split by patient, 90\% of admissions for training and validation and the remaining 10\% for testing.

\textbf{MIMIC-HiRes Pre-processing.} We follow the pre-processing pipeline of \cite{GRUwE} to extract 393 univariate, irregularly sampled time series from vital signs, laboratory measurements, medications, and procedures. No temporal rounding or resampling is applied, resulting in the highest degree of sparsity among all datasets evaluated in our benchmark. The data are split at the patient level, with 80\% of admissions used for training and validation (of which 20\% are reserved for validation) and the remaining 20\% held out for testing.

\subsection{MIMIC-IV}
\textbf{URL.} \url{https://physionet.org/content/mimiciv/3.1/}

\textbf{Pre-processing.} We use the publicly available MIMIC-IV dataset \cite{MIMIC_IV} and follow the data extraction and preprocessing pipeline of \cite{GRUODEBayes}. Specifically, we construct 100 univariate, irregularly sampled time series per patient admission. All observations are rounded to a 1-minute resolution, resulting in longer sequences than those in MIMIC-III for the same 48-hour observation window. For each admission, models are provided with the first 36 hours of data and tasked with predicting future time steps. The dataset is split at the patient level, with 90\% of records used for training and validation and the remaining 10\% reserved for testing.

\subsection{Dataset Statistics Definition}
We report the dataset statistics in \cref{tab:dataset_descriptive_stats}. To clarify how they are computed across all the datasets, we provide their definition below: 
\label{appendix:dataset_stats}
\begin{itemize}[noitemsep,topsep=0pt,partopsep=0pt] 
    \item \textbf{Missingness type:} MCAR (Missing Completely At Random), where the missingness is synthetically introduced by masking a subset of the observation and NMAR (Not Missing At Random), where the missingness is inherent in the data, were introduced in \cite{Singh1997Missingness} and \cite{Ramoni1997Missingness}.
    \item \textbf{Missingness (\%)}: For each sample, pad missing variables across all variables by using total time steps. 
    \begin{equation}
        missingness^{(i)} = \frac{\# \mathrm{\ of \  missing \ obs}^{(i)}}{\# \mathrm{\ of \ total \,obs}^{(i)}}.
    \end{equation}
    \item \textbf{Interarrival Variability:} To quantify the regularity of sampling in an irregular time series, we compute the coefficient of variation (CV) of the inter-observation intervals. 
Given a sequence of observation times $\{t_i\}_{i=1}^{N}$ with inter-arrival times $\Delta t_i = t_{i+1} - t_i$, the CV is defined as
\begin{equation}
\mathrm{CV}_{\Delta t} = \frac{\sqrt{\mathrm{Var}(\Delta t)}}{\mathbb{E}[\Delta t]}.
\end{equation}
The coefficient of variation provides a scale-invariant measure of dispersion of the sampling intervals. 
A low CV indicates nearly uniform sampling with consistent inter-observation gaps, while higher values correspond to increasingly irregular and bursty sampling patterns.
\end{itemize}

\section{Hyperparameter Search}
\label{appendix:hyperparameter_search}
In our experimental evaluation, we tune hyperparameters using grid search over predefined ranges for each model. Unless otherwise noted, we set the batch size to 50 for USHCN, 100 for Physionet, 100 for MIMIC-III, 32 for MIMIC-IV, and 1 for MIMIC-HiRes. When GPU memory errors are encountered, we reduce the batch size accordingly; if a model cannot be run even with the smallest feasible configuration, we report the result as out of memory in the main result tables. We train all models for 100 epochs on all datasets except MIMIC-HiRes, for which we train for 20 epochs. For each model and dataset, we select the configuration that achieves the best validation mean squared error and report its corresponding test-set results. Across all models, we vary the learning rate in $\{0.1, 0.01, 0.005, 0.001\}$, use Adam optimizer for all models, use a decay rate of $0.99$, and enable gradient clipping.

\begin{table}[H]
\centering
\small
\caption{Hyperparameter values for the grid search for all the methods.}
\begin{tabular}{lccccc}
\toprule
\textbf{Model} 
& \textbf{Latent Dim.} 
& \textbf{\# Layers} 
& \textbf{\# Heads} 
& \textbf{Embed. Dim.} 
& \textbf{Refs / Patches}  \\
\midrule
mTAND 
& 16, 32, 64, 128
& 1 
& 1, 2, 4 
& Time: 16, 32, 64 
& Ref pts: 32, 64
 \\

GRU-D 
& 16, 32, 64, 128 
& 1 
& -- 
& -- 
& -- 
 \\

CRU 
& 16, 32, 64, 128 
& 1 
& -- 
& -- 
& Basis: 10, 20 
 \\

f-CRU 
& 16, 32, 64, 128 
& 1
& -- 
& -- 
& Basis: 10, 20 
 \\

Latent ODE 
& 16, 32, 64, 128 
& Gen/Rec: 3 
& -- 
& Rec dim: 20, 32, 64 
& GRU units: 20 
 \\

ODE-RNN 
& 16, 32, 64, 128 
& 1 
& -- 
& -- 
& -- 
\\

ContiFormer 
& 16, 32, 64, 128 
& 1 
& -- 
& -- 
& -- 
 \\

GRU-$\Delta_t$ 
& 16, 32, 64, 128 
& 1 
& -- 
& -- 
& -- 
 \\

RKN-$\Delta_t$ 
& 16, 32, 64, 128 
& 1 
& -- 
& -- 
& -- 
 \\

T-PatchGNN 
& 16, 32, 64, 128 
& 1, 2, 4 
& 1, 2, 4 
& Time/Node: 8, 16, 32 
& Patches:  4, 8, 16
 \\

GraFITi 
& 16, 32, 64, 128 
& 1, 2, 4 
& 1, 2, 4 
& -- 
& -- 
 \\

GRUwE 
& 16, 32, 64, 128 
& 1 
& -- 
& -- 
& -- 
 \\

HyperIMTS 
& 16, 32, 64, 128 
& 1, 2, 4 
& 1, 2, 4 
& -- 
& -- 
\\

CVTA 
& 16, 32, 64, 128 
& 1 
& 1 
& -- 
& -- 
\\

\bottomrule
\end{tabular}
\label{tab:ushcn-hyperparameters}
\end{table}

\section{CVTA Ablations}
\subsection{Contribution of Attention Layer in CVTA}
\label{section:contribution_of_CVTA}
To isolate the effect of CVTA's attention mechanism, we compare it with FFN-based predictors that use the same CVTA state. Each predictor takes the concatenated CVTA state components as input, forming a fixed $5F$-dimensional vector, where $F$ is the number of variables. We vary the FFN depth (\texttt{D}) and latent size (\texttt{lsd}) to control model capacity. We also test a residual variant (\texttt{residual}=T), where the model output is added to the last observed values (i.e., \cref{eq:final_prediction}) to match CVTA's prediction function. \cref{tab:ffn_state_ablation} reports several FFN configurations and CVTA results on USHCN, Physionet, and MIMIC-III.

\begin{table}[H]
\centering
\scriptsize
\setlength{\tabcolsep}{2pt}
\caption{FFN ablation using the same state summary as CVTA. We report test MSE and MAE averaged over five random seeds, using the same metric scaling as in the main benchmark table.}
\label{tab:ffn_state_ablation}
\begin{tabular}{lcc|cc|cc}
\toprule
\multirow{2}{*}{\textbf{Model}} 
& \multicolumn{2}{c}{\textbf{USHCN}} 
& \multicolumn{2}{c}{\textbf{Physionet}} 
& \multicolumn{2}{c}{\textbf{MIMIC-III}} \\
\cmidrule(lr){2-3}
\cmidrule(lr){4-5}
\cmidrule(lr){6-7}
& \textbf{MSE $\times 10^{-2}$} 
& \textbf{MAE $\times 10^{-2}$}
& \textbf{MSE $\times 10^{-2}$} 
& \textbf{MAE $\times 10^{-2}$}
& \textbf{MSE $\times 10^{-1}$} 
& \textbf{MAE $\times 10^{-1}$} \\
\midrule
FFN(\texttt{D}=2, \texttt{lsd}=256, \texttt{residual}=F) 
& 0.032 & 0.398 & 0.337 & 3.654 & 4.995 & 4.389 \\
FFN(\texttt{D}=2, \texttt{lsd}=64, \texttt{residual}=T)  
& 0.033 & 0.359 & 0.295 & 2.952 & 4.752 & 4.169 \\
FFN(\texttt{D}=2, \texttt{lsd}=128, \texttt{residual}=T) 
& 0.024 & 0.223 & 0.271 & 2.911 & 4.631 & 4.050 \\
FFN(\texttt{D}=2, \texttt{lsd}=256, \texttt{residual}=T) 
& 0.023 & 0.183 & 0.269 & 2.924 & 4.626 & 4.032 \\
\midrule
\textbf{CVTA}
& \textbf{0.003} & \textbf{0.112} 
& \textbf{0.226} & \textbf{2.543} 
& \textbf{3.957} & \textbf{3.778} \\
\bottomrule
\end{tabular}
\end{table}

Based on the results in \cref{tab:ffn_state_ablation}, residual FFNs improve over the non-residual variant, confirming the benefit of the residual prediction formulation in \cref{eq:final_prediction}. However, even higher-capacity FFNs consistently underperform CVTA across USHCN, Physionet, and MIMIC-III. This demonstrates that CVTA’s performance gains are not simply due to the state or increased model capacity, but instead due to the combination of its compact CVTA state and the proposed cross-variable temporal attention mechanism.

\subsection{Architectural Ablations}
\label{appendix:cvta_ablation_result}
This section provides additional details on the implementation of the CVTA ablation variants used in our experiments. Each ablation selectively removes or modifies a specific module or design choice of the full model in order to assess its contribution to predictive performance.

\begin{itemize}[noitemsep,topsep=0pt,partopsep=0pt]
    \item \textbf{w/o VCE:} This variant replaces variable-conditioned embeddings with parameter-matched shared embeddings while keeping the proposed attention formulation unchanged.
    
    \item \textbf{w/o local slope:} This variant disables the local-slope branch and relies solely on the recent-observation branch for generating predictions.
    
    \item \textbf{w/o key mask:} In this variant, the observation and slope masks $\{\mathbf{m}_t, \mathbf{g}_t\}$ embeddings are omitted from the key embeddings in both attention branches, preventing the model from explicitly distinguishing between observed and missing entries at the key level.
    
    \item \textbf{w/o value mask:} This variant removes the observation and slope masks $\{\mathbf{m}_t, \mathbf{g}_t\}$ embeddings from the value embeddings in both attention branches, thereby eliminating explicit masking information from the value representations.
    
    \item \textbf{w/o elapsed time:} This variant sets the elapsed-time CVTA state component to zero for all variables, i.e., $\boldsymbol{\delta}_t = \mathbf{0}$, forcing the model to ignore timing information and rely only on value-based signals.
\end{itemize}

For completeness, we included the MSE and MAE performance metrics on all the benchmarks for the next observation prediction in \cref{tab:ablation_mse_mae}.

\begin{table}[H]
\centering
\tiny
\caption{CVTA Architectural Ablation study results (MSE / MAE) across datasets.}
\label{tab:ablation_mse_mae}
\setlength{\tabcolsep}{2.4pt}
\begin{tabular}{l|cccccccccc}
\toprule
\textbf{Model} 
& \multicolumn{2}{c}{\textbf{USHCN}}
& \multicolumn{2}{c}{\textbf{Physionet}}
& \multicolumn{2}{c}{\textbf{MIMIC-III}}
& \multicolumn{2}{c}{\textbf{MIMIC-IV}}
& \multicolumn{2}{c}{\textbf{MIMIC-HiRes}} \\
\cmidrule(lr){2-3}
\cmidrule(lr){4-5}
\cmidrule(lr){6-7}
\cmidrule(lr){8-9}
\cmidrule(lr){10-11}
&  MSE$\scalebox{0.75}{$\times 10^{-2}$}$ & MAE$\scalebox{0.75}{$\times 10^{-2}$}$
& MSE$\scalebox{0.75}{$\times 10^{-2}$}$ & MAE$\scalebox{0.75}{$\times 10^{-2}$}$
& MSE$\scalebox{0.75}{$\times 10^{-1}$}$ & MAE$\scalebox{0.75}{$\times 10^{-1}$}$
& MSE$\scalebox{0.75}{$\times 10^{-1}$}$ & MAE$\scalebox{0.75}{$\times 10^{-1}$}$
& MSE$\scalebox{0.75}{$\times 10^{-2}$}$ & MAE$\scalebox{0.75}{$\times 10^{-2}$}$ \\
\midrule
CVTA
& $\best{0.003}\pms{0.002}$ & $\best{0.112}\pms{0.033}$
& $\best{0.226}\pms{0.009}$ & $\best{2.543}\pms{0.024}$
& $\best{3.957}\pms{0.009}$ & $\best{3.778}\pms{0.016}$
& $\best{2.026}\pms{0.006}$ & $\best{2.811}\pms{0.016}$
& $\best{0.513}\pms{0.018}$ & $\best{3.941}\pms{0.034}$ \\
w/o VCE
& $0.013\pms{0.003}$ & $0.191\pms{0.022}$
& $0.494\pms{0.008}$ & $3.259\pms{0.036}$
& $5.236\pms{0.008}$ & $4.224\pms{0.023}$
& $2.932\pms{0.006}$ & $3.284\pms{0.019}$
& $1.016\pms{0.036}$ & $6.132\pms{0.061}$ \\
w/o local slope
& $0.006\pms{0.004}$ & $0.124\pms{0.031}$
& $0.241\pms{0.009}$ & $2.626\pms{0.032}$
& $4.005\pms{0.009}$ & $3.829\pms{0.021}$
& $2.129\pms{0.009}$ & $2.886\pms{0.021}$
& $0.903\pms{0.018}$ & $4.606\pms{0.031}$ \\
w/o key mask
& $0.004\pms{0.003}$ & $0.114\pms{0.032}$
& $0.227\pms{0.010}$ & $2.563\pms{0.034}$
& $3.960\pms{0.009}$ & $3.791\pms{0.018}$
& $2.080\pms{0.006}$ & $2.838\pms{0.015}$
& $0.809\pms{0.016}$ & $4.541\pms{0.032}$ \\
w/o value mask
& $0.010\pms{0.001}$ & $0.213\pms{0.031}$
& $0.237\pms{0.007}$ & $2.595\pms{0.021}$
& $4.110\pms{0.007}$ & $3.875\pms{0.023}$
& $2.117\pms{0.006}$ & $2.888\pms{0.014}$
& $0.832\pms{0.018}$ & $4.688\pms{0.030}$ \\
w/o elapsed time
& $0.004\pms{0.003}$ & $0.138\pms{0.036}$
& $0.241\pms{0.004}$ & $2.658\pms{0.019}$
& $4.112\pms{0.008}$ & $3.848\pms{0.015}$
& $2.168\pms{0.005}$ & $2.940\pms{0.015}$
& $0.869\pms{0.019}$ & $4.712\pms{0.028}$ \\
\bottomrule
\end{tabular}
\end{table}

\section{CVTA as Auxiliary Module.}
\label{section:cvta_base_model_integration}

CVTA can be integrated as an auxiliary module with existing irregular time series models. In this section, we present implementation details for using CVTA as an auxiliary module with existing irregular time series models: GraFITi, GRUwE, and HyperIMTS. As outlined in \cref{section:cvta_as_aux_pred_head}, the key step is obtaining the latent representation from each base model $\mathbf{h}_{t+\Delta T}^{{\scriptscriptstyle \mathrm{base}}}$, which subsequently drives the meta-decision model. Below, we specify how this latent representation is derived for each model variant.
\subsection{\textsc{GraFITi+CVTA}}
To model irregular multivariate time series, GraFITi \cite{grafitiPaper} builds a sparsity structured graph in the form of a bipartite graph connecting time points and variables nodes, where edges encode observation mask and observation value. This graph representation enables GraFITi to explicitly model irregular sampling patterns and missing data by performing message passing between time and variable nodes. We summarize the GraFITi representations that form the latent representation:
\begin{itemize}
    \item $\mathbf{h}^{\text{node},\,l}_u$ denotes the representation of a variable node $u$ at GNN layer $l$, capturing the latent state of the corresponding variable after $l$ rounds of message passing.
    
    \item $\mathbf{h}^{\text{node},\,l}_v$ denotes the representation of a time (or query-time) node $v$ at GNN layer $l$, encoding temporal context at the associated timestamp.
    
    \item $\mathbf{h}^{\text{edge},\,l}_{e}$ denotes the representation of an edge $e = \{u, v\}$ connecting variable node $u$ and time node $v$ at layer $l$, which models the observed value (or target query) and its interaction with the variable and time representations.
\end{itemize}
Let $L$ be the number of stacked graph neural network layers used to iteratively update node and edge embeddings before decoding the final predictions. For \textsc{GraFITi+CVTA}, to extract the latent representation for the meta-decision model, we concatenate the representations of variable node ($\mathbf{h}^{\text{node},\,L}_u$), time ($\mathbf{h}^{\text{node},\,L}_v$) and the edges ($\mathbf{h}^{\text{edge},\,L}_{e}$) at the last layer of the stack:
\begin{equation}
    \mathbf{h}_{t+\Delta T}^{{\scriptscriptstyle \mathrm{base}}} = \mathrm{Concat}\Bigl(\Bigl[\mathbf{h}_u^{\mathrm{node},\, L} \,; \, \mathbf{h}_v^{\mathrm{node},\, L} \, ; \, \mathbf{h}_e^{\mathrm{edge}, \,L}\Bigr] \Bigr).
\end{equation}

\subsection{\textsc{GRUwE+CVTA}}
GRUwE \cite{GRUwE} is an auto-regressive model that maintains an explicit latent state to summarize the historical information of an irregularly sampled multivariate time series. The model updates its latent state through two complementary mechanisms: a time-triggered reset, which accounts for temporal decay between observations, and an observation-triggered reset, which incorporates newly observed values. Predictions are generated by linearly projecting a time-decayed version of the latent state. Below, we briefly review the GRUwE representations that are fed as input to the meta-decision model:
\begin{itemize}
    \item $\mathbf{h}_t$ denotes the latent state representation of the multivariate time series at time $t$, summarizing all information from past observations up to time step $t$.

    \item $\mathbf{g}_{t,\Delta T}$ denotes the time-decayed latent state representation at a future prediction horizon $\Delta T$, obtained by applying learnable exponential decay functions to $\mathbf{h}_t$ (i.e., $\mathbf{g}_{t,\Delta T} = \mathbf{h}_t \odot \boldsymbol{\gamma}({\Delta T})$). It models the continuous-time evolution of the latent state from $t$ to $t + \Delta T$.
\end{itemize}
Since $\mathbf{g}_{t,\Delta T}$ already contains the complete latent state necessary for prediction, we directly use it to parameterize the meta-decision function. Thus, for \textsc{GRUwE+CVTA} we define:

\begin{equation}
    \mathbf{h}_{t+\Delta T}^{{\scriptscriptstyle \mathrm{base}}} = \mathbf{g}_{t,\Delta T}.
\end{equation}

\subsection{\textsc{HyperIMTS+CVTA}}
HyperIMTS \cite{HyperIMTS} formulates irregular multivariate time series forecasting as a node prediction problem on a hypergraph, where observed values are represented as nodes and temporal and variable dependencies are captured through hyperedge-based message passing. By using irregularity-aware hyperedge-to-hyperedge and hyperedge-to-node updates, HyperIMTS produces context-aware node representations that encode both temporal and cross-variable information. Below, we summarize the HyperIMTS representations that are fed to the meta-decision model:
\begin{itemize}
    \item $\mathbf{V}''$ denotes the node representation of the variables at future time point obtained after $L$ layers of hypergraph message passing.
    \item $\mathbf{E}'_{\mathrm{time}}$ denotes the aggregated latent representation of temporal hyperedges, capturing global temporal dependencies across irregular time points.
    \item $\mathbf{E}''_{\mathrm{var}}$ denotes the aggregated latent representation of variable hyperedges, encoding cross-variable interactions.
\end{itemize}

Similar to the HyperIMTS's prediction function (Eq.~(15) of the original paper \cite{HyperIMTS}), we concatenate the node and aggregated hyperedge representations to feed it to the meta-decision model of the \textsc{HyperIMTS+CVTA}:
\begin{equation}
    \mathbf{h}_{t+\Delta T}^{{\scriptscriptstyle \mathrm{base}}}
    = \mathrm{Concat} \Bigl( \Bigl[ \mathbf{V}'' \,; \, \mathbf{E}'_{\mathrm{time}}\,; \, \mathbf{E}''_{\mathrm{var}} \Bigl] \Bigr).
\end{equation}

\subsection{Results on Multi-Step Prediction}

\textbf{Parameter-Matched Controls.}
For each base model, we create a parameter-matched variant, denoted `Base (PM)', by increasing its latent dimension until it has at least as many parameters as the corresponding Base+CVTA model. The base architecture and training setup are otherwise unchanged.

Along with CVTA, Base+CVTA, we train Base (PM) models for the base models: GRUwE, GraFITi, and HyperIMTS and report results for prediction horizons $H=\{1,3,5,7\}$ across all benchmark datasets in \cref{tab:results_transposed_full}.

\begin{table}[H]
\tiny
\centering
\setlength{\tabcolsep}{1.pt}
\caption{Multi-step prediction performance in MSE and MAE across all datasets for horizons $H=\{1,3,5,7\}$. Results report mean and standard deviation over five seeds. `--' indicates GPU out-of-memory error during training.}
\label{tab:results_transposed_full}
\begin{tabular}{l c cccc cccc cccc cccc cccc}
\toprule
\textbf{Model} & \textbf{Metric}
& \multicolumn{4}{c}{\textbf{USHCN}}
& \multicolumn{4}{c}{\textbf{Physionet}}
& \multicolumn{4}{c}{\textbf{MIMIC-III}}
& \multicolumn{4}{c}{\textbf{MIMIC-IV}}
& \multicolumn{4}{c}{\textbf{MIMIC-HiRes}} \\
\cmidrule(lr){3-6}\cmidrule(lr){7-10}\cmidrule(lr){11-14}\cmidrule(lr){15-18}\cmidrule(lr){19-22}
& & H=1 & H=3 & H=5 & H=7
  & H=1 & H=3 & H=5 & H=7
  & H=1 & H=3 & H=5 & H=7
  & H=1 & H=3 & H=5 & H=7
  & H=1 & H=3 & H=5 & H=7 \\
\midrule

\textbf{CVTA} & Mean MSE &
0.003 & 0.010 & 0.013 & 0.015 &
0.226 & 0.245 & 0.254 & 0.274 &
3.957 & 4.155 & 4.282 & 4.435 &
2.026 & 2.259 & 2.382 & 2.452 &
0.513 & 0.832 & 0.913 & 1.041 \\
& Std MSE &
0.002 & 0.002 & 0.004 & 0.007 &
0.009 & 0.014 & 0.012 & 0.021 &
0.009 & 0.008 & 0.014 & 0.013 &
0.006 & 0.007 & 0.011 & 0.023 &
0.018 & 0.020 & 0.021 & 0.034 \\
\lightmidrule
\textbf{GRUwE (PM)} & Mean MSE &
0.011 & 0.033 & 0.118 & 0.184 &
0.234&0.250&0.262&0.279&
4.010 & 4.113 & 4.216 & 4.322 &
2.110 & 2.287 & 2.376 & 2.461 &
0.809 & 1.224 & 1.326 & 1.329 \\
& Std MSE &
0.014 & 0.015 & 0.016 & 0.021 &
0.007 & 0.004 & 0.008 & 0.015 &
0.008 & 0.008 & 0.013 & 0.012 &
0.008 & 0.013 & 0.011 & 0.016 &
0.022 & 0.021 & 0.032 & 0.030 \\

\textbf{\textsc{GRUwE+CVTA}} & Mean MSE &
0.004 & 0.017 & 0.024 & 0.028 &
0.211 & 0.230 & 0.239 & 0.258 &
3.783 & 3.925 & 4.039 & 4.151 &
2.015 & 2.191 & 2.285 & 2.334 &
0.604 & 0.931 & 0.959 & 0.989 \\
& Std MSE &
0.014 & 0.011 & 0.018 & 0.024 &
0.008 & 0.006 & 0.016 & 0.021 &
0.006 & 0.012 & 0.009 & 0.020 &
0.010 & 0.015 & 0.011 & 0.029 &
0.026 & 0.031 & 0.032 & 0.032 \\
\lightmidrule
\textbf{GraFITi (PM)} & Mean MSE &
0.072 & 0.081& 0.099&0.113 &
0.234 & 0.243 & 0.247 & 0.256 &
4.059 & 4.083 & 4.216 & 4.264 &
2.146 & 2.277 & 2.370 & 2.371 &
 1.409 & 1.445 & 1.462 & 1.703 \\
& Std MSE &
0.014 & 0.015 & 0.021 & 0.022 &
0.013 & 0.021 & 0.018 & 0.023 &
0.015 & 0.013 & 0.011 & 0.015 &
0.014 & 0.012 & 0.018 & 0.022 &
0.031 & 0.031 & 0.034 & 0.034 \\

\textbf{\textsc{GraFITi+CVTA}} & Mean MSE &
0.013 & 0.020 & 0.042 & 0.044 &
0.225 & 0.235 & 0.235 & 0.248 &
3.960 & 3.962 & 4.029 & 4.111 &
2.037 & 2.209 & 2.264 & 2.269 &
0.769 & 1.001 & 1.039 & 1.248 \\
& Std MSE &
 0.011 & 0.015 & 0.029 & 0.022 &
0.016 & 0.014 & 0.011 & 0.021 &
0.009 & 0.013 & 0.011 & 0.019 &
0.012 & 0.014 & 0.023 & 0.030 &
0.029 & 0.035 & 0.036 & 0.039 \\
\lightmidrule
\textbf{HyperIMTS (PM)} & Mean MSE &
0.032 &	0.034 &	0.045& 0.046 &
0.231 & 0.242 & 0.247 & 0.253 &
3.817 & 3.886 & 3.913 & 3.942 &
2.054 & 2.142 & 2.208 & 2.266 &
-- & -- & -- & -- \\
& Std MSE &
0.013 & 0.017 & 0.016 & 0.021 &
0.003 & 0.004 & 0.012 & 0.009 &
0.018 & 0.018 & 0.025 & 0.026 &
0.015 & 0.014 & 0.016 & 0.019 &
-- & -- & -- & -- \\

\textbf{\textsc{HyperIMTS+CVTA}} & Mean MSE &
0.010 & 0.011 & 0.013 & 0.016 &
0.226 & 0.234 & 0.235 & 0.247 &
3.755 & 3.759 & 3.816 & 3.939 &
1.978 & 2.119 & 2.179 & 2.215 &
-- & -- & -- & -- \\
& Std MSE &
0.011 & 0.021 & 0.017 & 0.026 &
0.007 & 0.004 & 0.007 & 0.014 &
0.018 & 0.016 & 0.021 & 0.024 &
0.012 & 0.013 & 0.026 & 0.030 &
-- & -- & -- & -- \\

\midrule
\addlinespace[0.75em]
\textbf{CVTA} & Mean MAE &
0.112 & 0.229 & 0.262 & 0.287 &
2.543 & 2.772 & 2.847 & 2.954 &
3.778 & 3.956 & 4.017 & 4.115 &
2.811 & 2.867 & 2.843 & 2.861 &
3.941 & 5.157 & 5.662 & 5.819 \\
& Std MAE &
0.033 & 0.031 & 0.034 & 0.039 &
0.024 & 0.021 & 0.026 & 0.033 &
0.016 & 0.017 & 0.020 & 0.025 &
0.016 & 0.022 & 0.023 & 0.031 &
0.034 & 0.032 & 0.041 & 0.039 \\
\lightmidrule
\textbf{GRUwE (PM)} & Mean MAE &
0.241 & 0.412 & 0.819 & 1.035 &
2.832 & 3.019 & 3.115 & 3.212 &
3.911 & 4.037 & 4.098 & 4.167 &
2.971 & 3.006 & 3.066 & 3.076 &
5.274 & 6.912 & 7.029 & 7.077 \\
& Std MAE &
0.185 & 0.122 & 0.143 & 0.196 &
0.034 & 0.031 & 0.043 & 0.052 &
0.022 & 0.026 & 0.029 & 0.033 &
0.063 & 0.069 & 0.071 & 0.068 &
0.037 & 0.034 & 0.039 & 0.041 \\

\textbf{\textsc{GRUwE+CVTA}} & Mean MAE &
0.182 & 0.351 & 0.358 & 0.418 &
2.612 & 2.779 & 2.877 & 2.991 &
3.775 & 3.858 & 3.932 & 4.016 &
2.881 & 2.879 & 2.898 & 2.888 &
4.390 & 5.744 & 5.857 & 5.979 \\
& Std MAE &
0.187 & 0.213 & 0.216 & 0.201 &
0.031 & 0.039 & 0.044 & 0.053 &
0.025 & 0.028 & 0.031 & 0.038 &
0.55 & 0.062 & 0.065 & 0.069 &
0.044 & 0.039 & 0.051 & 0.055 \\
\lightmidrule
\textbf{GraFITi (PM)} & Mean MAE &
0.672 & 0.752 & 0.806 & 0.884 &
2.683 & 2.805 & 2.876 & 2.919 &
3.825 & 3.939 & 4.041 & 4.074 &
2.984 & 2.976 & 2.939 & 2.883 &
7.441 & 7.678 & 8.343 & 8.406 \\
& Std MAE &
0.043 & 0.039 & 0.048 & 0.053 &
0.024 & 0.029 & 0.041 & 0.032 &
0.113 & 0.136 & 0.108 & 0.122 &
0.057 & 0.056 & 0.067 & 0.063 &
0.126 & 0.214 & 0.154 & 0.132 \\

\textbf{\textsc{GraFITi+CVTA}} & Mean MAE &
0.318 & 0.341 & 0.486 & 0.488 &
2.659 & 2.784 & 2.825 & 2.902 &
3.746 & 3.847 & 3.885 & 3.948 &
2.894 & 2.865 & 2.874 & 2.773 &
4.906 & 6.002 & 6.315 & 6.376 \\
& Std MAE &
0.039 & 0.045 & 0.046 & 0.044 &
0.021 & 0.033 & 0.029 & 0.036 &
0.126 & 0.152 & 0.158 & 0.165 &
0.055 & 0.051 & 0.062 & 0.068 &
0.114 & 0.164 & 0.132 & 0.149 \\
\lightmidrule
\textbf{HyperIMTS (PM)} & Mean MAE &
0.323 & 0.438 & 0.498 & 0.510 &
2.716 & 2.757 & 2.773 & 2.844 &
3.719 & 3.741 & 3.815 & 3.891 &
2.737 & 2.782 & 2.806 & 2.808 &
-- & -- & -- & -- \\
& Std MAE &
0.020 & 0.040 & 0.036 & 0.045 &
0.023 & 0.031 & 0.032 & 0.038 &
0.132 & 0.116 & 0.164 & 0.173 &
0.024 & 0.019 & 0.043 & 0.048 &
-- & -- & -- & -- \\

\textbf{\textsc{HyperIMTS+CVTA}} & Mean MAE &
0.184 & 0.189 & 0.204 & 0.280 &
2.619 & 2.713 & 2.771 & 2.842 &
3.652 & 3.721 & 3.814 & 3.882 &
2.731 & 2.740 & 2.747 & 2.749 &
-- & -- & -- & -- \\
& Std MAE &
0.019 & 0.018 & 0.021 & 0.030 &
0.022 & 0.027 & 0.032 & 0.038 &
0.129 & 0.144 & 0.140 & 0.168 &
0.033 & 0.042 & 0.051 & 0.046 &
-- & -- & -- & -- \\

\bottomrule
\end{tabular}
\end{table}

\section{Case Studies}
\label{section:clinical_case_studies}

\begin{table}[H]
\centering
\small
\caption{Top-8 attended variables for high-lactate and high-creatinine clinical scenarios.}
\label{tab:clinical_case_attention}
\setlength{\tabcolsep}{4pt}
\begin{tabular}{l p{0.78\linewidth}}
\toprule
\textbf{Target} & \textbf{Top-8 attended variables with attention scores} \\
\midrule
\textbf{Lactate} &
HCO$_3$ (0.2208), Lactate (0.1651), RespRate (0.1095), TroponinT (0.0771), Platelets (0.0636), ALT (0.0589), PaCO$_2$ (0.0376), FiO$_2$ (0.0357) \\

\textbf{Creatinine} &
Creatinine (0.8387), TroponinT (0.0392), Urine (0.0317), BUN (0.0146), Cholesterol (0.0133), Bilirubin (0.0112), HCO$_3$ (0.0110), ALP (0.0076) \\
\bottomrule
\end{tabular}
\end{table}
\textbf{Interpreting CVTA in a High-Lactate Clinical Scenario.} We select a prediction time point where the lactate level is clinically high ($\geq$ 12.4 mmol/L), indicating a critical patient state. Lactate is a marker of the balance between tissue oxygen delivery and utilization, systemic stress, and metabolic clearance. It is therefore connected to variables that reflect three broad processes: acid–base compensation, perfusion and organ injury, and hepatic/metabolic clearance. In this example, lactate rises from 16.96 (observed) to 22.42 mmol/L, while CVTA predicts 21.33 mmol/L, decomposed into +4.13 from the recent observation and +0.24 from the local slope branch. For this specific instance, we report CVTA's top-8 attended variables in \cref{tab:clinical_case_attention}.

HCO3 (0.2208) is linked to lactate through metabolic acidosis / acid–base balance, since lactate accumulation and bicarbonate buffering are tightly coupled. Lactate itself (0.1651) provides direct short-term continuity of the target signal. RespRate (0.1095) and PaCO2 (0.0376) relate to lactate through respiratory compensation, as ventilation adjusts to metabolic acidosis. TroponinT (0.0771) reflects cardiac injury and impaired perfusion, which can contribute to lactate generation through tissue hypoxia. Platelets (0.0636) relate through microcirculatory dysfunction and severe systemic illness, which may reduce effective oxygen delivery. ALT (0.0589) is connected through hepatic function and lactate clearance, since the liver is a major site of lactate metabolism. FiO2 (0.0357) reflects oxygen support and concern for impaired oxygenation, which is clinically tied to the same hypoxia/perfusion processes that influence lactate.

\textbf{Interpreting CVTA in a High-Creatinine Clinical Scenario.} We analyze a prediction time point where the creatinine level is abnormally high ($\geq$ 8.8 mg/dL). Elevated creatinine (serum creatinine) usually reflects reduced kidney function (decreased glomerular filtration rate, GFR). The main mechanisms include reduced blood flow to the kidneys (renal hypoperfusion, e.g., dehydration or heart failure), direct kidney damage (intrinsic kidney injury such as acute tubular necrosis), and blockage of urine flow (post-renal obstruction). Less commonly, creatinine may rise due to increased production (muscle breakdown), but impaired filtration is the dominant cause. In this example, creatinine rises from 8.89 (observed) to 10.39 mg/dL; CVTA predicts 9.34 mg/dL (+0.37 from recent observation and +0.08 from last slope attention branch). For this specific instance, we report CVTA's top-8 attended variables in \cref{tab:clinical_case_attention}.

The attended variables cluster around physiologic processes closely related to creatinine: renal perfusion and cardiac function (TroponinT), renal filtration and output (Urine, BUN, Creatinine), acid-base regulation (HCO3), and multi-organ or systemic illness burden (Bilirubin, ALP).

\section{Visualizing Learned Embeddings}
\vspace{-6em}
To qualitatively assess the structure of the learned variable representations, we visualize the model’s variable embedding matrix using Uniform Manifold Approximation and Projection (UMAP). After training, we load a pretrained CVTA model checkpoint and extract the learned variable embeddings. The extracted embedding matrix is provided directly as input to the UMAP. We apply UMAP with a neighborhood size of $n_{\text{neighbors}} = 5$ and a minimum distance parameter of $\text{min\_dist} = 0.001$, which emphasizes local neighborhood structure and highlights fine-grained relationships between variables. Dimensionality reduction is performed in an unsupervised manner, generating a two-dimensional embedding used for visualization. To further aid interpretation of the low-dimensional embedding, we apply Density-Based Spatial Clustering of Applications with Noise (DBSCAN) to the two-dimensional UMAP projection. DBSCAN is performed directly on the UMAP coordinates using a neighborhood radius of $\varepsilon = 0.45$ and a minimum cluster size of six points. This procedure identifies groups of variables that are densely packed in the UMAP space, while assigning isolated points to a diffuse (noise) category. Clusters are used solely for visualization and annotation purposes. Each variable is labeled with its corresponding cluster identifier when assigned to a non-noise cluster.

\begin{figure}[t]
    \centering
    \includegraphics[width=0.5\linewidth]{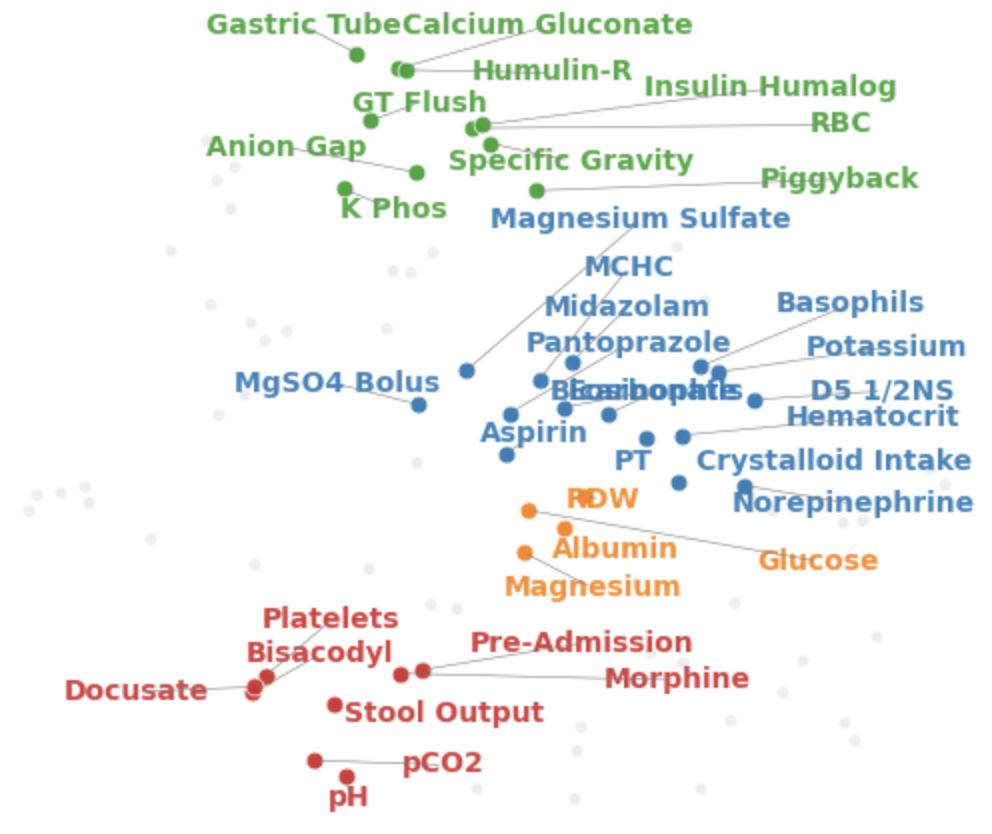}
    \caption{Visualizing the learned UMAP projections pertaining to the variable embeddings in MIMIC-III dataset}
    \label{fig:mimic_iii_emb}
\end{figure}

\begin{figure}[H]
    \centering
    \includegraphics[width=0.6\linewidth]{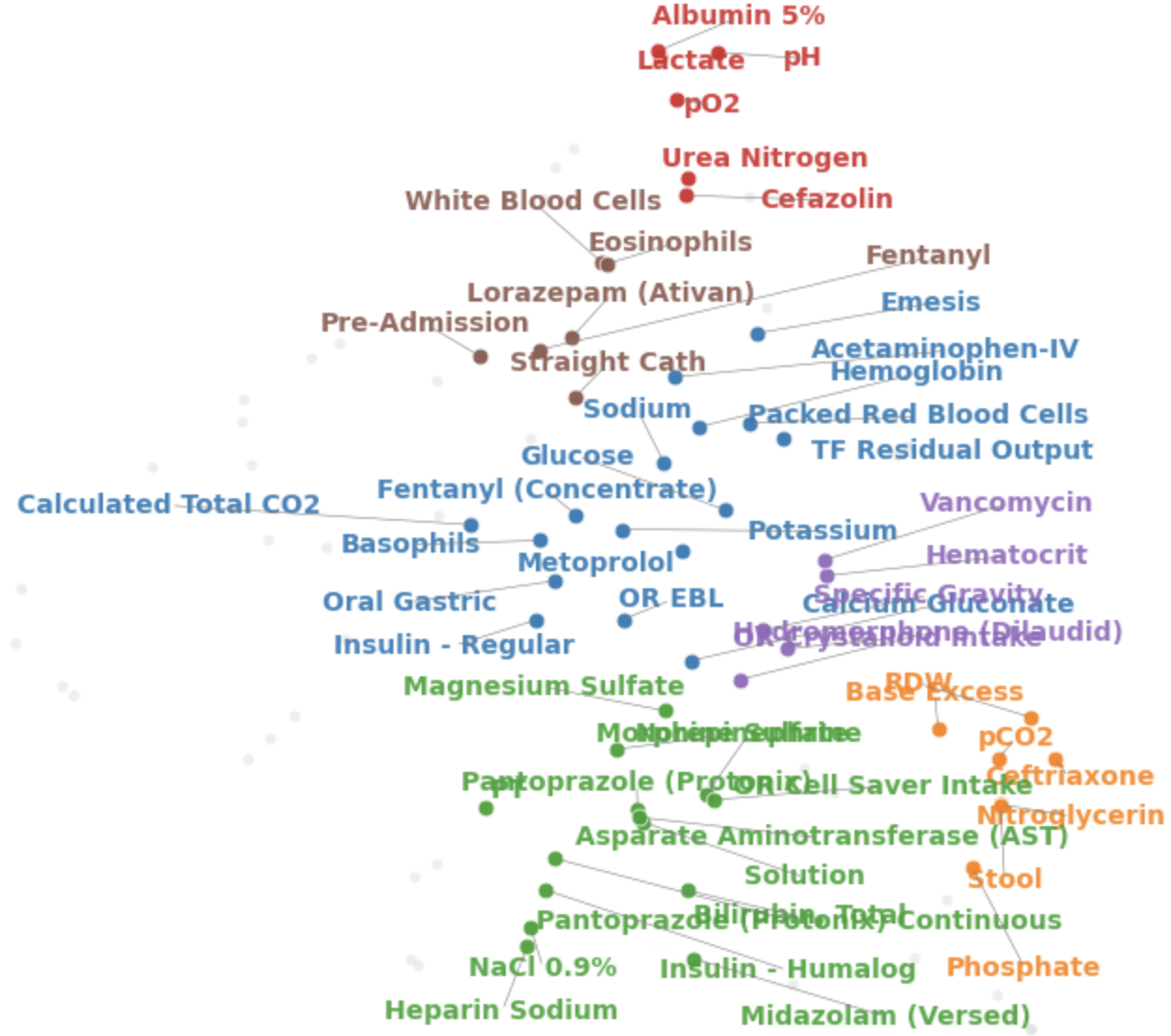}
    \caption{Visualizing the learned UMAP projections pertaining to the variable embeddings in MIMIC-IV dataset}
    \label{fig:mimic_iv_emb}
\end{figure}
\textbf{MIMIC-IV.} The most prominent cluster groups variables related to acute metabolic and hemodynamic management in sedated, mechanically ventilated patients. This cluster integrates laboratory values (Lactate, pH, Hemoglobin), continuous medications (Fentanyl, Midazolam, Insulin), resuscitative products (Packed Red Blood Cells, Albumin 5\%), and invasive monitoring outputs (OR EBL, TF Residual). Secondary clusters appear to group renal-electrolyte management (e.g., Creatinine, Phosphate, Furosemide) and coagulation-related parameters (e.g., Platelets, PTT).

\section{Attention Weight Faithfulness}
\label{section:attn_weights_faithfulness}
\citet{AttNotExp} argue that attention weights may not faithfully reflect what drives a model’s predictions, so they may not be reliable explanations. Following the method described in \citet{SerranoSmith}, to test faithfulness in CVTA, we directly perturb the attention weights and measure the resulting change in prediction error. For each valid target, we mask the selected variables by setting their attention weights to zero and renormalizing the remaining weights so that they form a valid convex combination. We apply this intervention separately to CVTA’s recent observation and local slope attention branches under two settings: Masked Top-3, which removes the three most attended variables; Masked Bottom-3, which removes the three least attended variables. We assess reliability with a paired prediction-level test: for each valid target in the heldout set, we compute the squared-error difference between the perturbed and baseline predictions (i.e. predictions with no attention weight perturbation), and then estimate the mean change in MSE with a 95\% paired bootstrap confidence interval. This tests whether masking the selected variables changes error on the same prediction targets.

For CVTA model trained on Physionet, when the top-3 attended variables are masked, the paired change in MSE is \bm{$+0.0417$}, with a 95\% confidence interval of $[0.0415, 0.0420]$. This shows that removing the most attended variables causes a substantial and highly consistent degradation in the predictive accuracy. In contrast, when the bottom-3 attended variables are masked, the paired change in MSE is $\bm{-1.12\times 10^{-5}}$, with a 95\% confidence interval of $[-1.19\times10^{-5}, -1.05\times10^{-5}]$. This near-zero effect indicates that the least attended variables contribute negligibly to the model’s predictions. Overall, these results provide evidence that the learned attention weights are indeed faithful. 

For additional reference, we provide Baseline MSE (unperturbed), and perturbed MSEs over all prediction targets aggregated by variable type for 20 representative variables.

\begin{table}[H]
\centering
\small
\caption{Effect of masking top- and bottom-attended variables.}
\label{tab:masked_attention_mse}
\begin{tabular}{>{\bfseries\boldmath}l S[table-format=2.2] S[table-format=3.2] S[table-format=2.2]}
\toprule
\textbf{Variable} &
{\makecell{\textbf{Baseline}\\\textbf{MSE} $(\times 10^{-3})$}} &
{\makecell{\textbf{Masked Top-3}\\\textbf{MSE} $(\times 10^{-3})$}} &
{\makecell{\textbf{Masked Bottom-3}\\\textbf{MSE} $(\times 10^{-3})$}} \\
\midrule
Albumin     & 19.67 & 74.42  & 19.93 \\
ALT         & 2.48  & 3.55   & 2.48  \\
Bilirubin   & 7.87  & 13.11  & 7.87  \\
BUN         & 6.76  & 12.61  & 6.75  \\
Creatinine  & 2.02  & 9.06   & 2.02  \\
$\mathrm{FiO}_2$ & 24.45 & 86.91  & 24.44 \\
GCS         & 17.20 & 121.65 & 17.20 \\
Glucose     & 6.41  & 10.47  & 6.41  \\
$\mathrm{HCO}_3$ & 5.36  & 36.36  & 5.36  \\
HR          & 1.27  & 3.45   & 1.27  \\
Lactate     & 4.66  & 8.58   & 4.68  \\
Na          & 1.02  & 47.92  & 1.02  \\
NIMAP       & 4.07  & 145.85 & 4.06  \\
$\mathrm{PaCO}_2$ & 6.60  & 103.55 & 6.61  \\
pH          & 0.37  & 5.05   & 0.37  \\
RespRate    & 1.71  & 60.32  & 1.71  \\
$\mathrm{SaO}_2$ & 1.52  & 199.62 & 1.51  \\
SysABP      & 3.83  & 28.13  & 3.83  \\
Troponin I  & 63.79 & 75.94  & 64.02 \\
Urine       & 0.27  & 9.49   & 0.27  \\
\bottomrule
\end{tabular}
\end{table}

\section{Computing Infrastructure}
\label{section:compute}

We used one server machine to deploy the experiments reported in the paper. This machine is equipped with 100GB memory, one NVIDIA L40S GPU, with Intel Xeon Platinum 8462Y+ @ 2.80 GHz processor and 16 CPU cores.

\end{document}